\newtheorem{theorem}{Theorem}
\begin{document}

\title{Adversarial Fair Multi-View Clustering}

\author{Mudi Jiang, Jiahui Zhou, Lianyu Hu, Xinying Liu, Zengyou He,  Zhikui Chen
\thanks{M. Jiang, J. Zhou, L. Hu, X. Liu and Z. He are with School of Software, Dalian
University of Technology, Dalian, China.}
\thanks{Z. Chen (corresponding author) is with School of Software, Dalian University of Technology, Dalian,
China, and Key Laboratory for Ubiquitous Network and Service Software
of Liaoning Province, Dalian, China.\protect\\ Email: zkchen@dlut.edu.cn}
\thanks{Manuscript received XXXX 2025; revised XXXX, 2025.}
}
\markboth{}%
{Shell \MakeLowercase{\textit{et al.}}: A Sample Article Using IEEEtran.cls for IEEE Journals}


\maketitle

\begin{abstract}
Cluster analysis is a fundamental problem in data mining and machine learning. In recent years, multi-view clustering has attracted increasing attention due to its ability to integrate complementary information from multiple views. However, existing methods primarily focus on clustering performance, while fairness—a critical concern in human-centered applications—has been largely overlooked. Although recent studies have explored group fairness in multi-view clustering, most methods impose explicit regularization on cluster assignments, relying on the alignment between sensitive attributes and the underlying cluster structure. However, this assumption often fails in practice and can degrade clustering performance. In this paper, we propose an adversarial fair multi-view clustering (AFMVC) framework that integrates fairness learning into the representation learning process. Specifically, our method employs adversarial training to fundamentally remove sensitive attribute information from learned features, ensuring that the resulting cluster assignments are unaffected by it. Furthermore, we theoretically prove that aligning view-specific clustering assignments with a fairness-invariant consensus distribution via KL divergence preserves clustering consistency without significantly compromising fairness, thereby providing additional theoretical guarantees for our framework. Extensive experiments on  data sets with fairness
constraints demonstrate that AFMVC achieves superior fairness and competitive clustering performance compared to existing multi-view clustering and fairness-aware clustering methods.
\end{abstract}

\begin{IEEEkeywords}
Multi-view clustering, Fair clustering, Unsupervised learning, Adversarial training
\end{IEEEkeywords}

\section{Introduction}
\label{1}
\IEEEPARstart{M}{ulti}-view data \cite{sun2013survey} has been extensively utilized in a wide range of real-world applications in the era of big data, as the diverse information it captures is vital for effective data mining and analysis. For instance, texture and structural features in images, or image frames and audio in movie clips, represent different views of the same entity. Among various data mining tasks, cluster analysis \cite{oyewole2023data} plays a fundamental role in uncovering hidden patterns or structures within data. In this context, multi-view clustering \cite{fang2023comprehensive} has attracted increasing attention due to its ability to integrate heterogeneous yet complementary information from multiple views, enabling more accurate and robust data partitioning compared to single-view approaches.

In recent years, numerous multi-view clustering (MVC) algorithms have been proposed, which can be broadly categorized into graph learning methods, subspace methods, matrix factorization methods, kernel learning methods, and deep learning methods. Although these methods have demonstrated promising performance in terms of accuracy, most of them primarily focus on improving clustering quality, while overlooking fairness concerns that are increasingly critical in sensitive real-world applications. In domains such as recommendation systems, recruitment, and healthcare, unfair clustering results may lead to biased decisions and amplify existing social biases.

Fairness in clustering \cite{chhabra2021overview} is typically defined based on the application context and is generally categorized into two types: individual fairness and group fairness. Individual fairness requires that similar individuals receive similar treatment, which relies on a well-defined similarity metric. Group fairness, on the other hand, is concerned with avoiding systematic disadvantages to any demographic group. Group-level fairness notions are often grounded in the concept of Disparate Impact \cite{rutherglen1987disparate}, which states that no group should be disproportionately harmed or favored by the outcomes of an algorithmic decision-making system. In this work, we focus on group fairness, as it is particularly relevant in real-world scenarios involving population-level equity and social responsibility.

Although fairness-aware learning has been extensively explored in unsupervised learning tasks, research on fairness in multi-view data remains relatively limited. A few recent studies have begun to investigate this problem from different perspectives and have introduced initial attempts to incorporate fairness into multi-view clustering frameworks \cite{zheng2023fairness,zhao2024dfmvc,li2024one}. However, these methods typically impose fairness constraints by directly regularizing the cluster assignment outputs, and often rely on a strong alignment between the distribution of sensitive attributes and the underlying cluster structure. In practice, this reliance may not hold, which can lead to suboptimal clustering results or fairness degradation.

Motivated by the above observation, this paper proposes \textbf{A}dversarial \textbf{F}air \textbf{M}ulti-\textbf{V}iew \textbf{C}lustering (AFMVC), a novel framework that integrates adversarial fairness learning into the  multi-view clustering process. By removing sensitive information prior to clustering, the framework aims to learn fair feature representations that are invariant to protected attributes. Specifically, we first employ autoencoders to extract features from multiple views and then introduce an adversarial objective as a fairness loss to remove sensitive information. A discriminator is trained to predict sensitive attributes, while a gradient reversal layer is applied to the feature space to encourage the encoder to suppress sensitive information. In our framework, a combination of reconstruction, clustering, and fairness losses is optimized to ensure that the learned features are both effective for clustering and invariant to group-specific biases.

To validate the effectiveness of the proposed method, we conduct extensive experiments on data sets with fairness constraints. The results show that our method achieves superior performance in both clustering accuracy and fairness compared to state-of-the-art (SOTA) multi-view and fairness-aware clustering methods.

The main contributions of this paper are as follows:
\begin{itemize}
    \item We propose a novel  multi-view fair clustering framework that integrates adversarial learning to remove sensitive information from the feature space. This approach provides a new perspective on promoting group fairness in the context of multi-view clustering.
    \item A unified training strategy is designed, where the reconstruction loss, clustering loss, and fairness loss are jointly optimized. This design enables the model to learn cluster-discriminative and fair feature representations, without relying on explicit modeling or strict regularization of group-specific output distributions.
    \item Theoretical analysis shows that aligning view-specific clustering results with a fairness-invariant consensus distribution via Kullback-Leibler divergence does not compromise fairness, providing theoretical guarantees for  our framework.
    \item The experimental results demonstrate that our method consistently outperforms  SOTA multi-view clustering and fairness-aware clustering approaches in terms of both clustering accuracy and  fairness.
\end{itemize}

The remainder of this paper is structured as follows. Section \ref{2} reviews the related work. Section \ref{3} introduces the proposed framework in detail. Section \ref{4} reports the experimental results. Finally, Section \ref{5} concludes the paper and outlines potential directions for future research.
\section{Related work}
\label{2}
\subsection{Multi-View Clustering}
As introduced in Section \ref{1}, each category of multi-view clustering methods adopts distinct strategies to address the challenges associated with multi-view data. Graph learning methods \cite{li2021consensus,liang2019consistency} construct a graph for each view to capture similarity structures, and integrate them via fusion or alignment to guide clustering. Subspace methods \cite{li2019flexible,yang2019split} project multi-view data into a shared low-dimensional space that retains common information, enabling consistent and noise-robust clustering. Matrix factorization methods \cite{wang2018multiview,yang2020uniform} decompose each view into low-rank matrices to uncover latent structures, which are then used for clustering. Kernel learning methods \cite{liu2023contrastive,yuan2022robust} map data into high-dimensional spaces using kernel functions to capture nonlinear relations, and perform clustering by combining or aligning multiple kernels. Given the relevance to our method, a brief overview of  deep multi-view clustering approaches is presented in the following subsection, which can be broadly categorized into two types based on whether feature learning and clustering are performed separately or jointly.
\subsubsection{Two-stage deep learning methods}
These methods typically follow a two-stage paradigm, where feature representations are first learned and clustering is subsequently performed in the embedding space. For instance, Li et al. \cite{10.5555/3367243.3367449} proposed DAMC, which employs adversarial training to enhance feature learning. Denoising autoencoders extract latent embeddings, while view-specific discriminators encourage alignment across views by distinguishing real from reconstructed data, spectral clustering is then applied to the shared latent space to obtain the final results. Similarly, Gao et al. \cite{gao2020cross} introduced a framework based on deep convolutional autoencoders, which are trained using a combination of reconstruction loss and a deep Canonical Correlation Analysis (CCA)-based loss function to enhance cross-view correlation. A self-expression layer is incorporated to capture inter-sample relationships, and clustering is subsequently performed on the resulting affinity matrix.

Other works, such as those by Trosten et al. \cite{trosten2021reconsidering} and Qin et al. \cite{qin2021semi}, follow similar designs but incorporate additional mechanisms to enhance representation learning. The former integrates a contrastive module to improve cross-view consistency, while the latter jointly learns shared and view-specific affinity matrices with limited supervision to regularize the clustering structure. 
\subsubsection{One-stage deep learning methods}
In these approaches, feature learning and clustering are simultaneously optimized within a unified deep framework. A class of representative methods in this category is Deep Embedded Clustering (DEC) by Xie et al. \cite{pmlr-v48-xieb16}, where a Kullback-Leibler (KL) divergence-based clustering loss is optimized alongside latent feature representations to encourage cluster-friendly embeddings. Building upon this idea, several extensions have been developed for multi-view scenarios. For example, Xie et al. \cite{8999493} extend DEC by jointly optimizing soft assignments or target distributions across multiple views to enable deep multi-view clustering. Xu et al. \cite{XU2021279} further introduce a collaborative training mechanism in which views iteratively guide each other through a shared auxiliary target distribution. In addition, attention mechanisms \cite{diallo2023auto} and adversarial training \cite{zhou2020end} have also been employed to improve feature fusion and clustering quality within a unified training framework.
\subsection{Fair Clustering}
\subsubsection{Individual fairness}
The notion of individual fairness in clustering was first introduced by Jung et al. \cite{jung_et_al:LIPIcs.FORC.2020.5}, who defined a fairness constraint based on a point-specific radius $r{(x)}$. For a point  $x$ in a data set $X$ of size $n$, let $r{(x)}$ denote the smallest radius such that the closed ball centered at $x$, with radius $r{(x)}$, contains at least $n/k$ points from $X$, given a target of $k$ clusters. This definition captures the intuition that, assuming a uniform random selection of $k$ centers, each point expects to be close to at least one of them. A clustering is considered $\alpha$-fair if every point is assigned to a center within distance $\alpha \cdot r{(x)}$. Building on this definition, Mahabadi et al. \cite{pmlr-v119-mahabadi20a} proposed the $(\beta,\gamma)$-bicriteria approximation framework, where $\beta$ quantifies the approximation ratio with respect to the optimal $\alpha$-fair cost, and $\gamma$ introduces a relaxation of the fairness constraint. Specifically, a solution is said to be a $(\beta,\gamma)$-approximation if its cost is at most $\beta$ times that of the optimal $\alpha$-fair clustering, and every point is assigned to a center within distance $\gamma \cdot \alpha \cdot r{(x)}$.

Subsequent work has focused on improving these approximation guarantees \cite{10.5555/3540261.3541283,pmlr-v151-vakilian22a}, primarily by tightening the bicriteria bounds for clustering objectives based on different $\ell_p$  norms, such as the $\ell_2$-norm in $k$-means and the $\ell_1$-norm in $k$-median. In addition, some efforts have aimed to reduce the computational complexity of fairness-aware clustering algorithms to improve scalability on large-scale data sets \cite{pmlr-v151-chhaya22a, pmlr-v238-bateni24a}.
\subsubsection{Group fairness}
Unlike individual fairness, group fairness is defined with respect to protected attributes such as race and gender. Depending on when fairness constraints are introduced, clustering algorithms for group fairness can be broadly classified into pre-processing, in-processing, and post-processing approaches. 

Pre-processing methods introduce fairness constraints before clustering is performed, typically by transforming the input data so that standard clustering algorithms applied afterward produce fair results. A representative example is fairlet decomposition \cite{NIPS2017_978fce5b,NEURIPS2020_f10f2da9}, which constructs small fair groups (fairlets) prior to clustering. Other related approaches, such as  fair coresets \cite{schmidt2020fair,NEURIPS2019_810dfbbe}, summarize the data to preserve fairness in a scalable manner, while antidote data \cite{pmlr-v171-chhabra22a} aims to improve fairness by augmenting the data set with additional samples. In-processing methods enforce fairness by modifying the clustering algorithm or objective function itself, typically through joint optimization of clustering quality and fairness. Some approaches integrate fairness constraints directly into traditional objectives such as spectral or $k$-median clustering \cite{kleindessner2019guarantees,10.1007/978-3-030-86520-7_47}, while others incorporate fairness through techniques like adversarial learning or multi-objective optimization in deep clustering frameworks \cite{zhang2021deep,li2020deep}. Post-processing methods achieve fairness by modifying clustering results after applying a standard algorithm. Unlike in-processing approaches, they do not change the clustering objective but adjust cluster assignments or centers to meet fairness criteria \cite{kleindessner2019fair,jones2020fair}.

In recent years, group fairness in multi-view clustering has received growing attention, with most existing methods focusing on in-processing strategies. Zheng et al. proposed Fair-MVC \cite{zheng2023fairness}, a fairness-aware multi-view clustering method that explicitly enforces group fairness constraints during clustering. The method maximizes the agreement of soft cluster assignments across views, while forcing the proportion of protected groups in each cluster to closely match that in the overall data set. Zhao et al. \cite{zhao2024dfmvc} proposed a method that incorporates contrastive constraints to learn consistent and discriminative representations across views. To promote fairness, the method guides the clustering assignments of each sensitive subgroup toward a predefined target distribution, preventing any group from being overly concentrated in specific clusters. Li et al. proposed FMSC \cite{li2024one}, a fairness-aware multi-view spectral clustering method that introduces group fairness through an explicit graph-theoretic regularization. Specifically, the method minimizes the average degree of protected group subgraphs within each cluster, which is theoretically shown to align with a classical definition of group fairness.

Despite their different designs, these methods share several limitations. They commonly rely on  assumptions about the alignment between sensitive attribute distributions and the underlying cluster structure, creating a strong dependency between fairness constraints and clustering outcomes. When this alignment fails, the imposed constraints may distort cluster boundaries and degrade performance.  Moreover, fairness is typically introduced through regularization of the clustering assignments, rather than being embedded into the feature learning process, leading to latent representations that retain sensitive information and undermine fairness. In contrast, inspired by the idea of adversarial fairness learning \cite{li2020deep}, our method adopts an adversarial strategy to eliminate sensitive information at the representation level. This enables the model to achieve fairness implicitly, without relying on group-specific modeling or assignment-level fairness constraints.

\section{Method}
\label{3}
In this section, we present the overall structure of the proposed multi-view fair clustering framework, AFMVC, as illustrated in Fig. \ref{fig1}. The framework consists of three main components: Multi-View Feature Reconstruction, Consensus-Guided Clustering, and Adversarial Fairness Learning. The multi-view feature reconstruction module learns compact and informative representations by reconstructing each view through a view-specific autoencoder. The consensus-guided clustering module leverages a fused assignment as a supervisory signal to guide view-specific cluster predictions, encouraging each view to learn clustering structures aligned with the consensus assignment. The adversarial fairness learning module integrates a multi-layer perceptron (MLP)-based discriminator and a gradient reversal layer (GRL) to eliminate sensitive attribute information from the learned representations. During training, the discriminator attempts to predict sensitive attributes from the fused features, while the encoder is trained adversarially—via the GRL—to make such predictions difficult. This adversarial interplay promotes the learning of fair, group-invariant representations at the feature level.

\begin{figure*}[htbp]
	\includegraphics[scale=0.7]{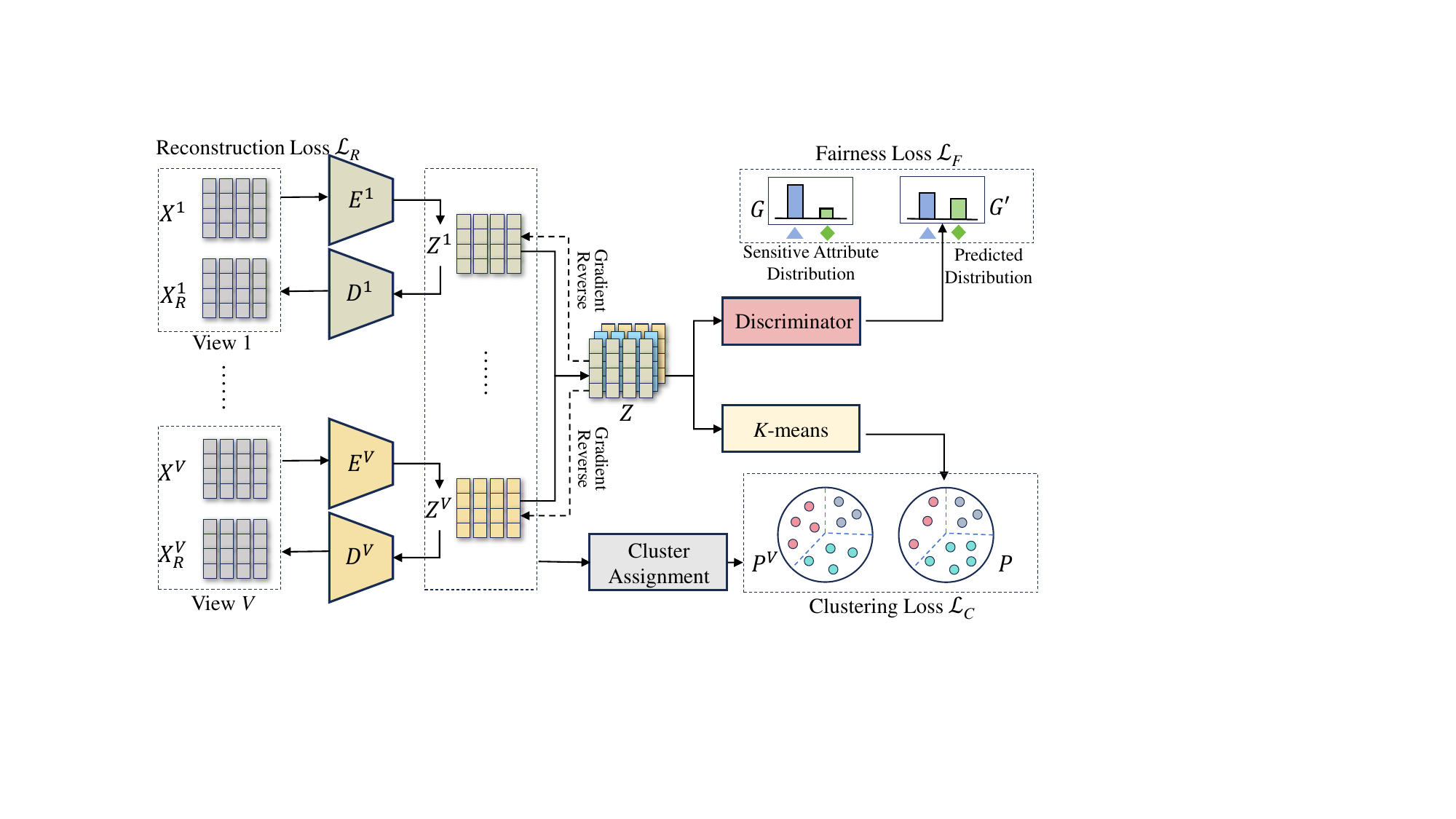}  
	\centering
	\caption{Overview of the proposed adversarial multi-view fair clustering framework.}  
	\label{fig1}   
\end{figure*}
\subsection{Notations and Problem Definition}
The  multi-view fair clustering task aims to partition a set of $N$ instances into $K$ distinct clusters based on information from multiple views.  Each instance is represented as $\{X=X_i^v\}_{v=1}^V$, where $X_i^v$ denotes the representation of the $i$-th instance in the $v$-th view, and is associated with a sensitive attribute $G_i \in \mathcal{G}$. For the $v$-th view, the input data can be organized as a matrix $X^v \in \mathbb{R}^{N \times d_v}$, where 
$N$ is the number of instances and $d_v$ denotes the feature dimension of view $v$. The task requires not only discovering a consistent underlying cluster structure across multiple views to achieve high-quality clustering results, but also ensuring that the outcome satisfies group fairness constraints with respect to the sensitive attribute.
\subsection{Multi-View Feature Reconstruction}
To capture view-specific representations while preserving the essential information of each view, we adopt a set of view-specific autoencoders to perform feature reconstruction for each view independently.  For the $v$-th view, the encoder $E^v$, parameterized by $\theta^v$, maps an input sample $X_i^v$ into a latent feature vector
\begin{equation}
\label{z^v}
   Z_i^v=E^v(X_i^v;\theta^v).
\end{equation}
The decoder $D^v$,  parameterized by $\phi^v$, then reconstructs the  input as:
\begin{equation}
    X_{i,R}^v = D^v(Z_i^v;  \phi^v).
\end{equation}
To preserve essential structural information, the reconstruction loss for view $v$ is defined as
\begin{equation}
\mathcal{L}_{R}^v = \sum_{i=1}^{N}\| X_{i,R}^v - X_i^v \|_2^2.
\end{equation}
The overall reconstruction objective across all views is then given by:
\begin{equation}
\label{loss_r}
\mathcal{L}_{R} = \sum_{v=1}^V \sum_{i=1}^{N}\| X_{i,R}^v - X_i^v \|_2^2.
\end{equation}
This loss encourages each autoencoder to retain view-specific information that is essential for downstream clustering and fairness modeling. In addition, the reconstruction process serves as a pretraining stage that provides well-initialized parameters, facilitating more stable and effective optimization in subsequent components.
\subsection{Consensus-Guided Clustering}
This subsection introduces the consensus-guided clustering module, which functions as the fundamental model for achieving clustering consistency. Its main objective is to align the clustering structures of all views by introducing a shared consensus distribution as a supervision signal.

After obtaining the view-specific latent representations $\{Z^v\}^V_{v=1}$ from the reconstruction module, we compute the fused representation $Z \in \mathbb{R}^{N \times d}$ by concatenating the latent features of all views along the feature dimension:
\begin{equation}
\label{z}
    Z_i=[Z_i^1;Z_i^2;...;Z_i^V],
\end{equation}
where $d=\sum_{v=1}^V d_v$ is the total feature dimension across all views. We then apply $k$-means clustering to $Z$ to generate the initial cluster assignments. These cluster labels are subsequently converted into one-hot vectors, forming the consensus target distribution $P \in \mathbb{R}^{N \times K}$, where $P_{ij}$ denotes the probability of assigning instance $i$ to cluster $j$ (i.e., $P_{ij}=1$ if instance $i$ is assigned to cluster $j$, and 0 otherwise). This distribution acts as a unified objective that guides all views toward a common clustering structure.

For each view $v$, we compute the soft cluster assignment matrix $Q^v\in \mathbb{R}^{N \times K}$  by measuring the similarity between each latent  representation $Z_i^v$ and a set of view-specific cluster centroids 
$\{\mu_j^v\}^K_{j=1}$ using a Student's t-distribution kernel:
\begin{equation}
\label{Q}
Q_{ij}^v = \frac{(1 + \| Z_i^v - \mu_j^v \|^2 / \alpha)^{-\frac{\alpha+1}{2}}}{\sum_{j'=1}^K (1 + \| Z_i^v - \mu_{j'}^v \|^2 / \alpha)^{-\frac{\alpha+1}{2}}},
\end{equation}
where $Q^v_{ij}$ represents the predicted probability that instance $i$ belongs to cluster $j$ in view $v$, and $\alpha$ represents the degree of freedom of  the Student's t-distribution, set to 1 by default in our experiments.
The cluster centroids $\{\mu_j^v\}$ are randomly initialized and updated jointly with the encoder parameters via backpropagation.

To align each view's clustering structure with the consensus distribution, we minimize the  KL divergence between the one-hot consensus target $P$ and each view-specific soft assignment $Q^v$:
\begin{equation}
\label{L_C}
    \mathcal{L}_{C} = \sum_{v=1}^V \sum_{i=1}^{N}\sum_{j=1}^{K}P_{ij} \log \frac{P_{ij}}{Q_{ij}}.
\end{equation}
In practice, we do not update the consensus target $P$  at every training iteration. Instead, $k$-means clustering is recomputed every $T$ epochs (e.g., every 50 epochs). This periodic update strategy strikes a balance between stability and adaptability: updating too frequently could lead to optimization oscillations before the encoders sufficiently adapt to previous targets, while updating too infrequently may cause the model to rely on outdated clustering assignments.

By periodically refreshing the consensus targets and aligning view-specific predictions with them, our framework encourages all views to produce clustering assignments that are mutually consistent, thereby enhancing the overall coherence and quality of the multi-view clustering results.
\subsection{Adversarial Fairness Learning}
To further promote fairness in multi-view clustering, we integrate an adversarial training strategy into the proposed framework. The key idea is to remove sensitive attribute information from the fused latent representations, thereby reducing the influence of protected attributes on the clustering results.

In this adversarial setup, the encoder network functions as a generator that aims to produce latent features free of sensitive information, while an adversarial network $D$ acts as a discriminator attempting to predict the sensitive attribute $G$ from the fused representations $Z$. To facilitate this adversarial interaction, we employ a gradient reversal layer (GRL) between the encoder and the discriminator. During the forward pass, the GRL remains inactive and allows the encoded features to flow into the discriminator unchanged. In the backward pass, it multiplies the gradients by a negative scalar, forcing the encoder to update in the opposite direction of the discriminator’s objective, thereby removing sensitive information from the learned representations.

Formally, given $Z \in \mathbb{R}^{N \times d}$, the adversarial network, parameterized by $\omega$,  predicts the sensitive attribute probabilities $G' \in \mathbb{R}^{N \times |G|}$ as:
\begin{equation}
\label{G'}
    G'= D(Z;\omega),
\end{equation}
where $|G|$ denotes the number of sensitive attribute classes.

Moreover, to stabilize the adversarial training process, the coefficient of the gradient reversal layer is dynamically adjusted during training according to a sigmoid-based schedule:
\begin{equation}
\label{coeff}
\text{coeff} = \frac{2}{1 + e^{-\beta \times iter / n}} - 1,
\end{equation}
where $\beta$ controls the growth rate, $iter$ denotes the current training iteration, and $n$ is the total number of training iterations. This progressive adjustment allows the model to initially focus on clustering performance and gradually enforce stronger fairness constraints as training progresses.

The fairness loss is then defined as the standard multi-class cross-entropy loss:
\begin{equation}
\label{L_F}
\mathcal{L}_{F} = -\frac{1}{N} \sum_{i=1}^{N} \sum_{c=1}^{|G|} \mathds{1}(G_i = c) \log G'_{i,c},
\end{equation}
where $\mathds{1}(G_i=c)$ is an indicator function that equals 1 if the true sensitive label of instance $i$ is $c$, and 0 otherwise,  $G'_{i,c}$ denotes the predicted probability that instance $i$ belongs to sensitive attribute class $c$.

By optimizing this adversarial objective, the encoder is encouraged to learn latent features that are discriminative for clustering while being invariant to sensitive attributes.
\subsection{Overall Loss Function}
The overall training objective of our proposed framework integrates three key components: reconstruction loss, clustering loss, and fairness loss. Each component plays a distinct role in ensuring that the learned representations are informative, clusterable, and group-invariant.

Formally, the training process can be formulated as the following minimax saddle point optimization:

\begin{equation}
\max_{\theta, \phi, \mu} \quad \lambda_F\mathcal{L}_{F} - \mathcal{L}_{R} - \lambda_C \mathcal{L}_{C},
\end{equation}

\begin{equation}
\min_{\omega} \quad \lambda_F\mathcal{L}_{F}.
\end{equation}
In this formulation, the encoders are trained to minimize the reconstruction loss $\mathcal{L}_{R}$ and the clustering  loss $\mathcal{L}_{C}$ while maximizing the fairness loss $\mathcal{L}_{F}$ via a gradient reversal mechanism. Simultaneously, the adversarial network seeks to minimize the fairness loss $\mathcal{L}_{F}$ to accurately predict the sensitive attributes. The balance between the clustering  term and fairness term is controlled by the hyper-parameters $\lambda_F$ and $\lambda_C$. 
The complete training procedure is outlined in Algorithm~\ref{algorithm1}.

\begin{algorithm}[htb]
\caption{Training Procedure for Adversarial Fair Multi-View Clustering (AFMVC)}
\label{algorithm1}
\begin{algorithmic}[1]
\Require Multi-view data set $X$; Sensitive attributes $G$; Number of clusters $K$; Trade-off hyper-parameters $\lambda_{C}, \lambda_{F}$; Pseudo-label update interval $T$; Number of iterations $n$; Growth rate $\beta$.
\Ensure Final cluster assignments $\{y_i\}_{i=1}^N$.

\State Pre-train view-specific autoencoders by minimizing $\mathcal{L}_R$ according to Eq. \ref{loss_r}.
\State Obtain initial latent representations $Z$ by encoding $X$ with pre-trained autoencoders.
\State Initialize consensus target distribution $P$ by applying $k$-means on $Z$.
\For{ $i = 1$ to $n$}
    
    \For{each mini-batch}
        \State Compute latent representations $Z^v$, $Z$ according to Eqs. \ref{z^v} and \ref{z}, respectively.
        \State Compute soft cluster assignments $Q^v$ for each view according to Eq. \ref{Q}.
        \State Compute sensitive attribute probabilities for each instance according to Eq. \ref{G'}.
        \State Compute  $\mathcal{L}_{R}$, $\mathcal{L}_{C}$, and  $\mathcal{L}_{F}$ according to Eqs. \ref{loss_r}, \ref{L_C} and \ref{L_F}, respectively.
        \State Update model parameters $\theta$ (encoder), $\phi$ (decoder), $\mu$ (cluster centroids), and $\omega$ (adversarial network) using Adam optimizer, incorporating the gradient reversal mechanism.
        \State Gradually increase the gradient reversal coefficient according to Eq. \ref{coeff}.
    \EndFor
    \If{$i \; \% \; T = 0$}
        \State Update consensus target distribution $P$ by running $k$-means on $Z$.
    \EndIf
\EndFor
\State Obtain final cluster assignments $\{y_i\}_{i=1}^N$ from $Z$ via $k$-means.
\end{algorithmic}
\end{algorithm}

\subsection{Time Complexity Analysis}
We analyze the time complexity of the proposed AFMVC framework based on the primary computational steps during training. Let $L$ denote the number of layers in each encoder, decoder, and discriminator, and let $M$ be the maximum hidden layer size. In each training iteration, $V$ view-specific encoders and decoders, along with a discriminator, are involved in both forward and backward propagation. These operations require $O(VLM^2)$ time per instance. Additionally, computing soft cluster assignments for each view—by calculating distances to $K$ cluster centroids—incurs an extra $O(Kd_v)$ cost per instance. Since the above processes dominate the training cost, the overall time complexity of AFMVC across $n$ epochs is $O(N n  (VL M^2 + K d))$.

\subsection{Theoretical Analysis}
In this section, we theoretically demonstrate that fairness in the consensus clustering result can propagate to individual views through KL alignment. Specifically, by minimizing the KL divergence between each view-specific clustering and the distribution derived from the fair consensus representation, the model aligns predictions across views while preserving the encoder's ability to generate fair representations. This indicates that KL alignment is fairness-preserving, and serves as a complementary mechanism to our adversarial fair multi-view clustering.

Define $P$ as the clustering assignment derived from the fused representation $Z$, and denote $Q^{(v)}$ as the clustering assignment obtained from the representation of the $v$-th view. Let $\mathbb{P}$ and $\mathbb{Q}^{(v)}$ represent the joint distributions over $(P, G)$ and $(Q^{(v)}, G)$ respectively. we then have the following theorem:

\begin{theorem}
If  $P$ is independent of the sensitive attribute $G$ and $D_{\mathrm{KL}}(\mathbb{Q}^{(v)}\|\mathbb{P}) \leq \varepsilon$, then the mutual information between  $Q^{(v)}$ and  $G$ is upper-bounded by:
\begin{equation}
    I(Q^{(v)}; G) \leq \frac{1}{2} \sqrt{ \frac{\varepsilon}{8} } \cdot \ln \frac{2}{\varepsilon} + \mathcal{O}( \sqrt{\varepsilon} ).
\end{equation}
\end{theorem}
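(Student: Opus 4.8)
The hypothesis $P \perp G$ plays two roles: it forces $I(P;G)=0$, and it makes the reference law factorize as $\mathbb{P} = \mathbb{P}_P \otimes \mathbb{P}_G$. The plan is to treat $I(Q^{(v)};G)$ as a perturbation of the vanishing quantity $I(P;G)$, with the perturbation size governed by the distance between $\mathbb{Q}^{(v)}$ and $\mathbb{P}$. The first step is Pinsker's inequality, which turns the KL hypothesis into a total-variation bound $\delta := \|\mathbb{Q}^{(v)}-\mathbb{P}\|_{\mathrm{TV}} \le \sqrt{\varepsilon/2}$. Since forming a marginal is a deterministic (data-processing) map and cannot increase total variation, the cluster-marginals and the $G$-marginals of the two joint laws are likewise within $\delta$ of each other.

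Next I would expand mutual information through entropies, $I(Q^{(v)};G) = H(Q^{(v)}) + H(G) - H(Q^{(v)},G)$, and subtract the identically zero expression $I(P;G) = H(P)+H(G)-H(P,G)$. Comparing each entropy across the two $\delta$-close laws with the modulus of continuity of Shannon entropy on an alphabet of size $m$, $|H(\mu)-H(\nu)| \le \delta\ln(m-1)+h(\delta)$ where $h(\delta) = -\delta\ln\delta-(1-\delta)\ln(1-\delta)$, one sees the structure of the bound: the $\delta\ln m$ pieces (with $m\in\{K,\,K|G|\}$) are $\mathcal{O}(\sqrt\varepsilon)$ after substituting $\delta\sim\sqrt\varepsilon$, while the governing term is the binary entropy, whose leading behaviour $-\delta\ln\delta \sim \sqrt\varepsilon\,\ln(1/\varepsilon)$ is what produces the $\ln(2/\varepsilon)$ factor in the claim. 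Substituting $\delta = \sqrt{\varepsilon/2}$ and carrying the constants through then yields a bound of the displayed shape $c\,\sqrt{\varepsilon/8}\,\ln(2/\varepsilon) + \mathcal{O}(\sqrt\varepsilon)$.

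The crux — and the reason the bound is superlinear in $\delta$ rather than simply $\mathcal{O}(\delta)$ — is that mutual information is \emph{not} Lipschitz in total variation: the logarithm diverges near the boundary of the simplex, so only a modulus of continuity of order $\delta\ln(1/\delta)$ is available, and this is exactly what manufactures the $\ln(2/\varepsilon)$ factor. The delicate step is obtaining a continuity estimate sharp enough to recover the precise constant $\tfrac12$ in front of $\sqrt{\varepsilon/8}$: the naive three-entropy triangle bound already exhibits the correct $\delta\ln(1/\delta)$ scaling but loses the constant (each entropy difference contributes its own $h(\delta)$), so a tighter, direct continuity bound for the information functional is needed. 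I would also note that a shorter and strictly stronger route exists: since $\mathbb{P}$ factorizes, the identity $D_{\mathrm{KL}}(\mathbb{Q}^{(v)}\,\|\,\mathbb{P}_P\otimes\mathbb{P}_G) = I(Q^{(v)};G) + D_{\mathrm{KL}}(\mathbb{Q}^{(v)}_Q\|\mathbb{P}_P) + D_{\mathrm{KL}}(\mathbb{Q}^{(v)}_G\|\mathbb{P}_G)$ together with nonnegativity of KL divergence gives $I(Q^{(v)};G) \le D_{\mathrm{KL}}(\mathbb{Q}^{(v)}\|\mathbb{P}) \le \varepsilon$ at once; the Pinsker-plus-continuity argument is merely the route that reproduces the weaker, $\sqrt\varepsilon$-scaled expression written in the statement.
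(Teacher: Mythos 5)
Your primary route is essentially the paper's own proof. The paper likewise starts from $I(P;G)=0$, applies Pinsker's inequality to get $\mathrm{TV}(\mathbb{Q}^{(v)},\mathbb{P}) \leq \eta := \sqrt{\varepsilon/2}$, and then—rather than expanding into three entropies—invokes a continuity result of Prelov for mutual information under total-variation perturbations, $|I(Q^{(v)};G)-I(P;G)| \leq \frac{\eta}{2}\ln\frac{1}{\eta}+\mathcal{O}(\eta)$. That citation is precisely the ``tighter, direct continuity bound for the information functional'' you correctly identified as the missing ingredient: your Fannes-type triangle bound over the three entropies produces the right $\delta\ln(1/\delta)$ shape but a leading constant roughly six times larger (three binary-entropy terms $h(\delta)\sim\delta\ln(1/\delta)$ versus $\frac{\delta}{2}\ln(1/\delta)$), so as written your main argument proves the theorem only up to constants; plugging in the sharp continuity lemma closes that gap exactly as the paper does.

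The genuinely different—and more valuable—part of your proposal is the closing remark, which the paper does not contain and which is both correct and strictly stronger than the theorem itself. Under the independence hypothesis $\mathbb{P}$ is a product law, so the chain-rule decomposition
\begin{equation*}
D_{\mathrm{KL}}\bigl(\mathbb{Q}^{(v)}\,\|\,\mathbb{P}\bigr) \;=\; I\bigl(Q^{(v)};G\bigr) \;+\; D_{\mathrm{KL}}\bigl(\mathbb{Q}^{(v)}_{Q}\,\|\,\mathbb{P}_{P}\bigr) \;+\; D_{\mathrm{KL}}\bigl(\mathbb{Q}^{(v)}_{G}\,\|\,\mathbb{P}_{G}\bigr),
\end{equation*}
where subscripts denote marginals, together with nonnegativity of each KL term, yields $I(Q^{(v)};G) \leq \varepsilon$ in two lines—no Pinsker, no continuity lemma, no asymptotic error term. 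For small $\varepsilon$ this dominates the theorem's $\Theta(\sqrt{\varepsilon}\ln(1/\varepsilon))$ bound: at the paper's own illustrative value $\varepsilon=0.1$, your inequality gives $0.1$ versus the paper's $\approx 0.167$, and it is exact rather than asymptotic. The Pinsker-plus-continuity machinery would only be forced on you if the hypothesis were a total-variation bound instead of a KL bound; given the KL hypothesis actually stated, your decomposition shows the theorem's conclusion is loose and could be replaced by the cleaner statement $I(Q^{(v)};G) \leq D_{\mathrm{KL}}(\mathbb{Q}^{(v)}\|\mathbb{P})$.
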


\begin{proof}
Let $\mathcal{P}$ denote the set of possible cluster labels (e.g., $\{1,...,K\}$), and $\mathcal{G}$ the set of sensitive attribute categories. Since $P$ is independent of $G$, then for all $p \in \mathcal{P}$, $g \in \mathcal{G}$, we have
\begin{equation}
\textstyle \Pr_{\mathbb{P}}(P = p, G = g) = \textstyle \Pr_{\mathbb{P}}(P = p) \cdot \textstyle \Pr_{\mathbb{P}}(G = g).
\end{equation}
Hence, we have 
\begin{align}
    &I(P; G)  \nonumber\\
    =&\sum_{(p,g) \in \mathcal{P} \times \mathcal{G}} \textstyle \Pr_{\mathbb{P}}(P = p, G = g) \cdot \log \left( \frac{\Pr_{\mathbb{P}}(P = p, G = g)}{\Pr_{\mathbb{P}}(P = p)\Pr_{\mathbb{P}}(G = g)} \right) \nonumber\\
    =& 0.
\end{align}
According to Pinsker’s inequality \cite{csiszar2011information}, since $D_{\mathrm{KL}}(\mathbb{Q}^{(v)} \| \mathbb{P}) \leq \varepsilon$, the total variation (TV) distance between $\mathbb{Q}^{(v)}$ and $\mathbb{P}$ is bounded by:
\begin{align}  
&\mathrm{TV}(\mathbb{Q}^{(v)}, \mathbb{P}) \nonumber\\
= &\frac {\sum_{(p,g) \in \mathcal{P} \times \mathcal{G}}  \left| \textstyle \Pr_{\mathbb{Q}^{(v)}}(Q^{(v)}=p, G=g) - \textstyle\Pr_{\mathbb{P}}(P=p, G=g) \right|} {2}\nonumber\\
\leq &\sqrt{ \frac{1}{2} \varepsilon } := \eta.
\end{align}
To obtain a sharper bound, we further invoke the continuity result of mutual information from \cite{prelov2008mutual}. Specifically, they show that for two joint distributions defined over finite discrete variables, if their total variation distance satisfies $\mathrm{TV}(\mathbb{Q}^{(v)}, \mathbb{P}) \leq \eta$, then the mutual information difference satisfies:
\begin{equation}
    |I(Q^{(v)}; G) - I(P; G)| \leq \frac{\eta}{2} \ln \frac{1}{\eta} + \mathcal{O}(\eta).
\end{equation}
Since $I(P; G) = 0$ by the independence assumption, we obtain:
\begin{equation}
    I(Q^{(v)}; G) \leq \frac{\eta}{2} \ln \frac{1}{\eta} + \mathcal{O}(\eta).
\end{equation}
Substituting $\eta = \sqrt{\frac{1}{2} \varepsilon}$ into the inequality yields:
\begin{align}
\label{final}
    I(Q^{(v)}; G) 
    &\leq \frac{1}{2} \cdot \sqrt{\frac{1}{2} \varepsilon} \cdot \ln \left( \frac{1}{\sqrt{\frac{1}{2} \varepsilon}} \right) + \mathcal{O}\left( \sqrt{\frac{1}{2} \varepsilon} \right) \nonumber\\
    &= \sqrt{ \frac{\varepsilon}{8} } \cdot \ln \left( \sqrt{ \frac{2}{\varepsilon} } \right) + \mathcal{O}( \sqrt{\varepsilon} ) \nonumber\\
    &= \sqrt{ \frac{\varepsilon}{8} } \cdot \left( \frac{1}{2} \ln \frac{2}{\varepsilon} \right) + \mathcal{O}( \sqrt{\varepsilon} ) \nonumber\\
    &= \frac{1}{2} \sqrt{ \frac{\varepsilon}{8} } \cdot \ln \frac{2}{\varepsilon} + \mathcal{O}( \sqrt{\varepsilon} ).
\end{align}
For example, when $D_{\mathrm{KL}}(\mathbb{Q}^{(v)} | \mathbb{P}) \leq 0.1$ (i.e., $\varepsilon=0.1$), substituting this into the upper bound in Equation \ref{final} yields $I(Q^{(v)}; G) \lesssim 0.167$, ignoring higher-order terms. 

This completes the proof that, under KL alignment with a fairness-invariant consensus clustering result, the mutual information between view-specific clustering assignments and the sensitive attribute remains provably bounded. This finding highlights that the use of KL alignment, while primarily intended to improve cross-view consistency, does not undermine fairness—a property particularly important when combined with fairness-enhancing strategies.
\end{proof}

\section{Experiments}
\label{4}
In this section, we conduct experiments to evaluate the effectiveness of the proposed method. All experiments were performed on a PC equipped with an Intel Core i7-10700F CPU (2.90 GHz), 16 GB of RAM, and an NVIDIA GeForce RTX 1660 GPU with 6 GB of memory.
\begin{table*}[htbp]
	\caption{Comparison of various clustering methods in terms of two accuracy metrics and one fairness metric. For each metric across the data sets, the highest score is highlighted in red, and the second-highest score is highlighted in blue.}
	\label{compare}
	\centering

		\begin{tabular}{c|cccccccccc}
			
			\toprule
			Data set &Metrics&BFKM&VFC&FFC&CHOC&MCPL&CGL&FairMVC&FMSC&AFMVC\\
			\midrule
			&   ACC    &0.370 &0.370   &0.368 &0.373 &0.330 &0.368  &\textcolor{blue}{0.402} &0.387 &\textcolor{red}{0.465} \\
			Credit Card &    NMI    & \textcolor{blue}{0.188}&0.186   &0.187 &0.172 &0.104 &0.123  &0.117 &0.126 &\textcolor{red}{0.210} \\
			&   BAL   &0.348 &0.341   &0.340 &0.350 &\textcolor{blue}{0.356} &0.355  &0.341 &\textcolor{red}{0.357} &0.352 \\
			\midrule
                &   ACC    &0.635 &0.633   &0.633 &\textcolor{blue}{0.707} &0.672 &0.641  &0.623 &0.652 &\textcolor{red}{0.749} \\
			Bank Marketing &    NMI    &0.056 &0.056   &0.056 &0.064 &\textcolor{blue}{0.068} &0.061  &0.033 &0.045 &\textcolor{red}{0.094} \\
			&    BAL   &0.289 &0.289   &0.286 &\textcolor{blue}{0.302} &0.288 &0.285  &0.288 &\textcolor{red}{0.305} &\textcolor{blue}{0.302} \\
                \midrule
                &   ACC    &0.550 &0.542   &0.549 &0.520 &0.839 &\textcolor{red}{0.888}  &0.588 &\textcolor{blue}{0.862} &0.836\\
			Law School &    NMI    &\textcolor{red}{0.081} &0.063   &\textcolor{red}{0.081} &0.072&0.050 &0.078  &0.077 &0.060 &0.047\\
			&    BAL   &\textcolor{blue}{0.430}&0.428   &0.428 &0.414 &0.417 &0.362  &0.424 &\textcolor{red}{0.432}&0.420 \\
			\midrule
			&   ACC    &0.797 &0.925   &- &\textcolor{red}{1} &0.831 &\textcolor{blue}{0.997}  &- &0.293 &0.864 \\
			Mfeat &    NMI    &0.754 &0.856   &- &\textcolor{red}{1} &0.814 &\textcolor{blue}{0.994}  &- &0.211 &0.825 \\
			&    BAL    &0.432 &0.434   &- &\textcolor{red}{0.455} &0.404 &\textcolor{blue}{0.450}  &- &0.412 &0.440 \\
			\midrule
			&   ACC    &0.689 &0.750  &- &\textcolor{red}{0.818} &0.699 &0.599  &- &\textcolor{blue}{0.806} &0.750 \\
			COIL &    NMI    &0.784 &0.821   &- &\textcolor{red}{0.906} &0.831 &0.818  &- &\textcolor{blue}{0.895} &0.840 \\
			&    BAL    &\textcolor{red}{0.371} &0.347   &- &0.347 &0.347 &0.347  &- &0.347 &\textcolor{blue}{0.365} \\
            \midrule
			&   ACC    &0.608 &0.644   &- &0.684 &0.674 &\textcolor{blue}{0.699}  &- &0.600 &\textcolor{red}{0.733} \\
			Mean Value &    NMI    &0.373 &0.396   &- &\textcolor{red}{0.443}&0.373 &\textcolor{blue}{0.415}  &- &0.267 &0.403 \\
			&   BAL    &\textcolor{blue}{0.374} &0.369   &- &\textcolor{blue}{0.374} &0.362 &0.360  &- &0.371 &\textcolor{red}{0.376} \\
            \midrule
			&   ACC    &5.9 &5.5   &- &\textcolor{blue}{3.4} &5.0 &4.5  &- &3.6 &\textcolor{red}{2.7} \\
			Mean Rank &    NMI    &4.5 &4.8   &- &\textcolor{red}{3.0}&5.6 &4.4  &- &6.0 &\textcolor{blue}{3.6} \\
			&    BAL    &3.7 &4.9   &- &4.3 &5.5 &5.6  &- &\textcolor{red}{2.8} &\textcolor{blue}{3.5} \\
			\bottomrule
	\end{tabular}
\end{table*}
\subsection{Experiment Setting}
\subsubsection{Data Sets}
\begin{table}[htbp]
	\caption{The summary statistics on data sets used in the performance evaluation.}
	\label{dataset}
	\centering
        \setlength{\tabcolsep}{0.62mm}{
	\begin{tabular}{ccccc}
		\toprule
		Data set &  \#Samples &  \#Clusters &\#Features & Sensitive Feature \\
		\midrule
		Credit Card &  5000   &  5   &  22/22   &   Gender      \\
            Bank Marketing &  2907   &  2   &  12/12   &   Marital      \\
            Law School &  10000   &  2       &  10/10   &  Gender   \\
		Mfeat &  2000   &  10   &  216/76/64/6/240/47   &   Synthetic Binary    \\
		COIL &  1440   &  20      &  1021/3304/6750  &  Synthetic Binary  \\
		\bottomrule
	\end{tabular}}
\end{table}
We conduct experiments on five data sets with fairness constraints, including Credit Card, Bank Marketing, Law School, Mfeat and COIL. Specifically, for the single-view data sets—Credit Card, Bank Marketing, and Law School—we follow \cite{zheng2023fairness} and construct two views by applying non-linear transformations (e.g., Sigmoid and ReLU). For natural multi-view data sets Mfeat and COIL, we follow \cite{pmlr-v54-zafar17a} and randomly assign each instance to one of two protected groups using a Bernoulli distribution with $p=0.5$. Details of each data set, including the number of samples, features, clusters, and sensitive attributes, are summarized in Table \ref{dataset}. Note that although the original Credit Card and Law School data sets contain 29537 and 18692 samples respectively, we randomly subsample 5,000 and 10,000 instances from them to ensure that all comparison methods—including FMSC, MCPL, and CHOC—can be executed successfully, as some of them fail to handle large-scale data sets. 
\subsubsection{Comparison Methods}
The following clustering algorithms are used for comparison with AFMVC and can be categorized into three groups.

\textit{Single-view fair clustering methods (all views are concatenated as the input for these methods)}:
\begin{itemize}
    \item BFKM \cite{pan2023balanced}: This method incorporates fairness and balance constraints into the $k$-means objective function by penalizing group representation deviation and cluster size imbalance,  and optimizes the objective via coordinate descent. In our implementation, the fairness parameter $\rho$ and the balance parameter $\lambda$ are  set to 2000 and 10000 for all data sets, respectively.
    \item VFC \cite{ziko2021variational}: This framework introduces a variational formulation that integrates a KL-divergence-based fairness penalty with diverse clustering objectives, enabling flexible trade-off between clustering quality and demographic balance via a unified bound-optimization scheme. The trade-off parameter $\lambda$ is adaptively selected from a predefined range to satisfy a target level of fairness, following the strategy described in the original implementation.
    \item FFC \cite{pan2023fairness}:  A multi-stage method that enforces fairness in initialization, relaxes constraints to improve clustering quality, and refines results via fairness-preserving local search. The balance threshold $\mu$ is set to 0.8.
\end{itemize}
\begin{figure*}[htbp]
	\includegraphics[scale=0.7]{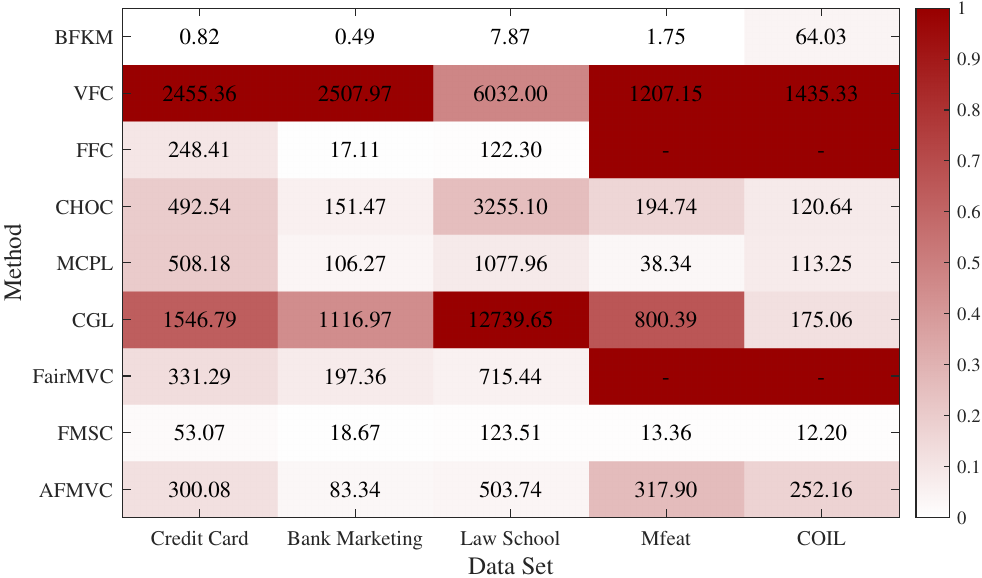}  
	\centering
	\caption{Running time comparison across different data sets. The coloring of the heatmap is determined by the normalized running time: for each data set, the running time is scaled to the range $(0, 1]$ by dividing each value by the maximum running time within that data set. The corresponding color scale, where lighter shades indicate shorter running time, is illustrated in the legend on the right side.}
	\label{time}   
\end{figure*}

\textit{State-of-the-art multi-view clustering methods}:
\begin{itemize}
    \item CHOC \cite{you2024consider}: The model constructs  view-specific graphs to capture consistent and specific structures, integrates them into a comprehensive affinity matrix, and optimizes via the alternating direction method of multipliers. In the experiments, both parameters $\beta$ and $\lambda$ are set to 100.
    \item MCPL \cite{cai2024multi}: This approach combines latent and original data representations by employing pseudo-labels and latent graph recovery, and further improves clustering via a refined label fusion strategy. The parameter settings, following the original paper, are  $\alpha=1000$, $\lambda=1000$, $\beta=0.05$, $\gamma=0.1$ and $\mu=0.0005$.
    \item CGL \cite{li2021consensus}: This method unifies spectral embedding and low-rank tensor learning within a joint framework to learn a consensus graph in the embedded space. In our experiments, the parameters $\lambda$ and $C$ are both set to 1, and the number of nearest neighbors $k$ is set to 15.
\end{itemize}

\textit{Multi-view fair clustering methods}:
\begin{itemize}
    \item FairMVC \cite{zheng2023fairness}: This model embeds group fairness into multi-view clustering by constraining the distribution of sensitive attributes in each cluster to match the global distribution, while enhancing feature representation through contrastive regularization. The parameters $\alpha, \beta, \gamma$ are set to 0.01, 0.1, 0.005, respectively.
    \item FMSC \cite{li2024one}: This approach introduces a one-stage spectral clustering framework that integrates a fairness-aware regularizer, derived from graph theory, to directly yield fair clustering results without requiring post-processing.  The fairness hyper-parameter is set to 0.01.
\end{itemize}
For our method, the hyper-parameters $\lambda_{C}$ and $ \lambda_{F}$ are uniformly set to 0.1 and 0.01, respectively, across all data sets. The update interval for the distribution $P$ is set to 50, and the number of training epochs $n$ is fixed at 1000 and the growth rate $\beta$ is set to 10. All experiments are repeated ten times on each data set, and the average results are reported. 

\subsubsection{Evaluation Metrics}
We evaluate the clustering performance of the proposed method using Clustering Accuracy (ACC) and Normalized Mutual Information (NMI), which are widely used to measure the consistency between predicted cluster assignments and ground-truth labels. For both metrics, higher values represent better clustering performance.

To assess fairness across different algorithms, we employ the Balance (BAL) metric \cite{zheng2023fairness}, which is defined as:
\begin{equation}
\text{BAL} = \min_i \left( \frac{\min_j |\Omega_i \cap G_j|}{|\Omega_i|} \right),
\end{equation}
where \(\Omega_i\) denotes the set of instances assigned to the \(i\)-th cluster, and \(G_j\) represents the set of instances belonging to the \(j\)-th sensitive group. A higher BAL value indicates better group fairness across clusters.

\begin{table*}[htbp]
\caption{Ablation study on different loss combinations. The best result for each metric is marked in red. }
\label{ablation}
\centering
\renewcommand\arraystretch{1.2}
\setlength{\tabcolsep}{1mm}{
\begin{tabular}{c|ccc|ccc|ccc|ccc|ccc|ccc}
\toprule
\multirow{2}{*}{} & \multicolumn{3}{c|}{Loss Components} & \multicolumn{3}{c|}{Credit Card} & \multicolumn{3}{c|}{Bank Marketing} & \multicolumn{3}{c}{Law School}& \multicolumn{3}{c}{Mfeat} & \multicolumn{3}{c}{COIL} \\
\cmidrule(lr){2-4} \cmidrule(lr){5-7} \cmidrule(lr){8-10} \cmidrule(lr){11-13} \cmidrule(lr){14-16} \cmidrule(lr){17-19}
 & $\mathcal{L}_R$ & $\mathcal{L}_F$ & $\mathcal{L}_C$ & ACC & NMI & BAL & ACC & NMI & BAL & ACC & NMI & BAL & ACC & NMI & BAL & ACC & NMI & BAL \\
\midrule
(A) & \checkmark &  & \checkmark & 0.468 & 0.209 & 0.351 & \textcolor{red}{0.762} & 0.093 & \textcolor{red}{0.302} &\textcolor{red}{0.840} &\textcolor{red}{0.051} &0.407 & \textcolor{red}{0.924} & \textcolor{red}{0.864} & 0.434 &0.739 &0.826 &0.338 \\
(B) &  & \checkmark & \checkmark & 0.460 & 0.201 & \textcolor{red}{0.352} & 0.715 & 0.068 & 0.292 &0.808 &0.049 &0.417 & 0.722 & 0.650 & 0.426&0.445 &0.606 &0.288 \\
(C) & \checkmark & \checkmark &  & 0.461 & 0.207 & 0.350 & 0.690 & 0.085 & 0.289 &0.804 &0.036 &0.417 & 0.832 & 0.828 & \textcolor{red}{0.442}&\textcolor{red}{0.758} &0.839 &0.333 \\
(D) & \checkmark & \checkmark & \checkmark & 0.465 & \textcolor{red}{0.210} & \textcolor{red}{0.352} & 0.749 & \textcolor{red}{0.094} & \textcolor{red}{0.302} &0.836  &0.047 &\textcolor{red}{0.420} & 0.864 & 0.825 & 0.440 &0.750 &\textcolor{red}{0.840} &\textcolor{red}{0.365}\\
\bottomrule
\end{tabular}}
\end{table*}
\begin{figure*}[htbp] 
        
	\centering  
	\vspace{-0.05cm} 
	\subfigcapskip=-5pt 
	\subfigure[ACC]{
		\includegraphics[width=0.25\linewidth]{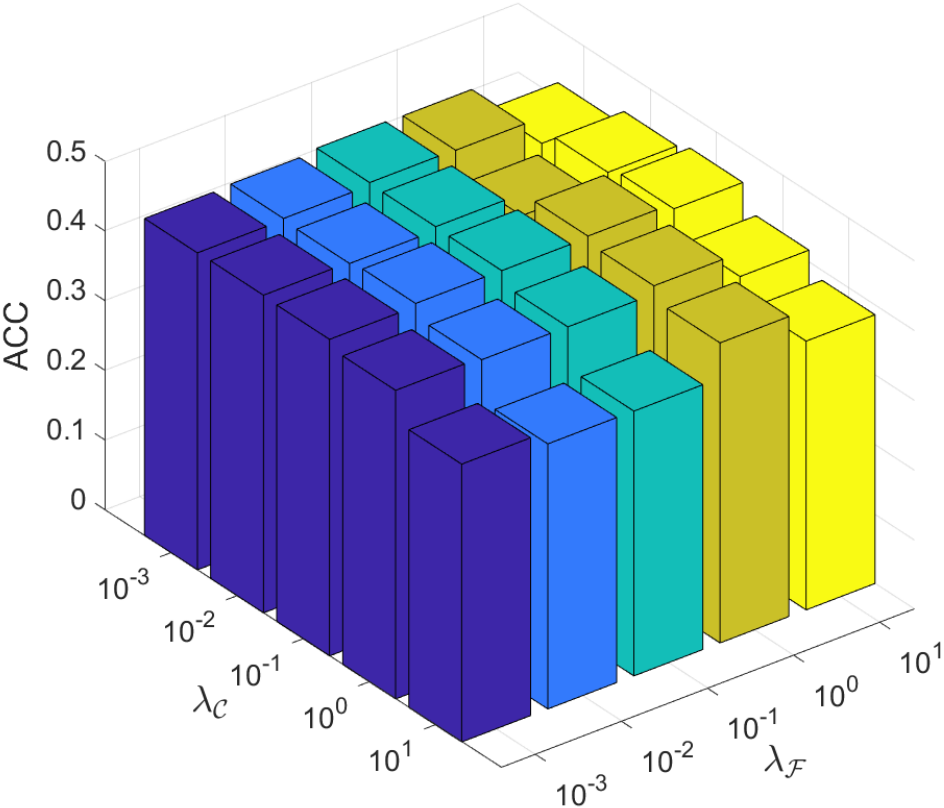}}
        \quad 
	\subfigure[NMI]{
		\includegraphics[width=0.25\linewidth]{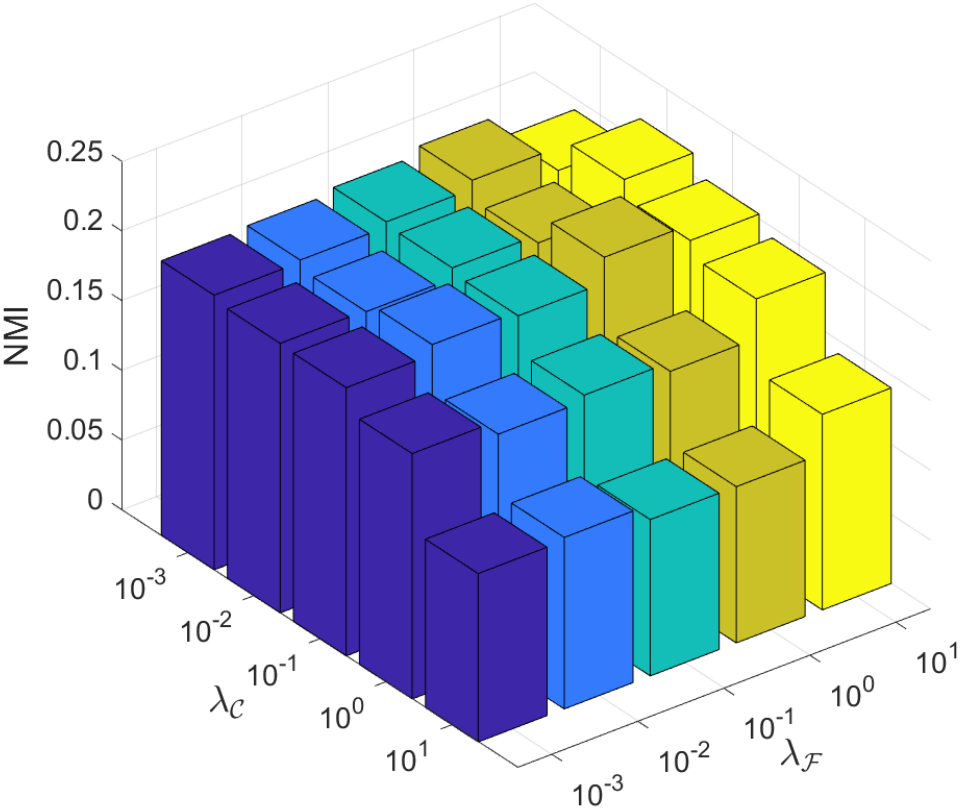}}
	\quad 
	\subfigure[BAL]{
		\includegraphics[width=0.25\linewidth]{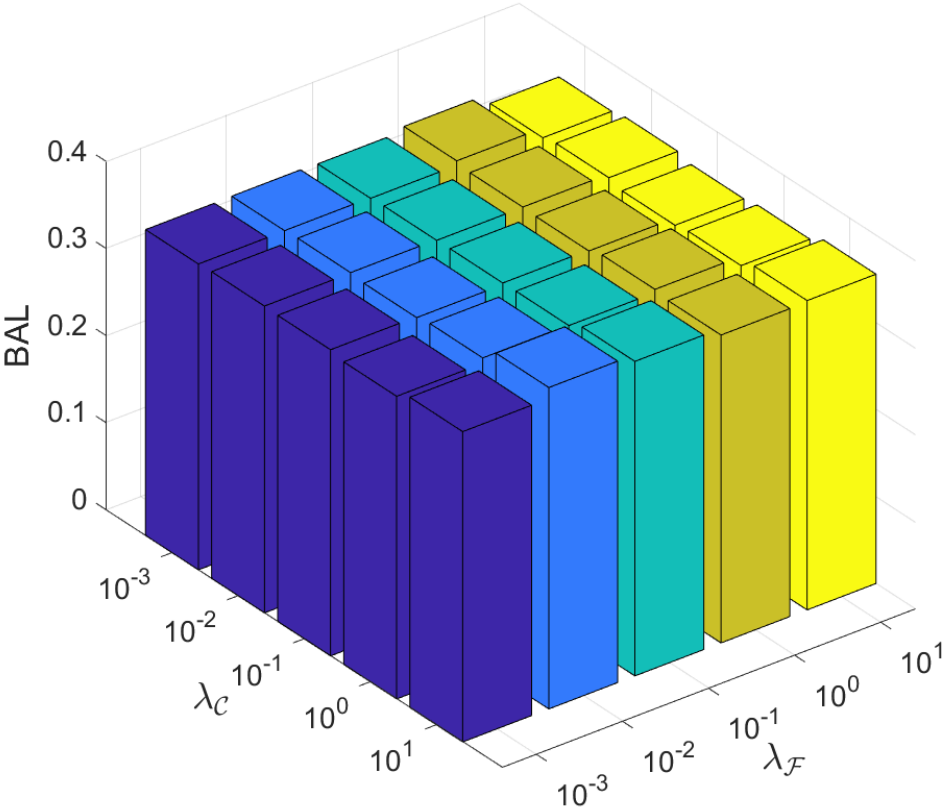}}

	\caption{The effect of parameters $\lambda_C$ and $\lambda_F$  in terms of  ACC, NMI and BAL on Credit Card data set.}
    \label{parameter1}
\end{figure*}

\begin{figure*}[htbp] 
        
	\centering  
	\vspace{-0.05cm} 
	\subfigcapskip=-5pt 
	\subfigure[ACC]{
		\includegraphics[width=0.25\linewidth]{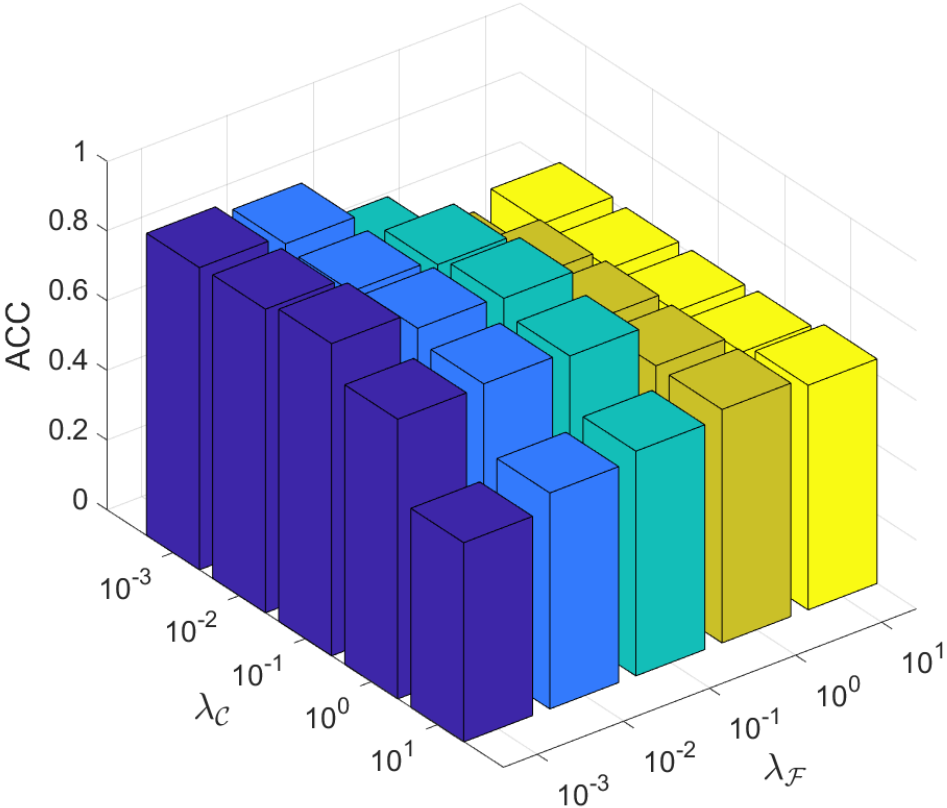}}
        \quad 
	\subfigure[NMI]{
		\includegraphics[width=0.25\linewidth]{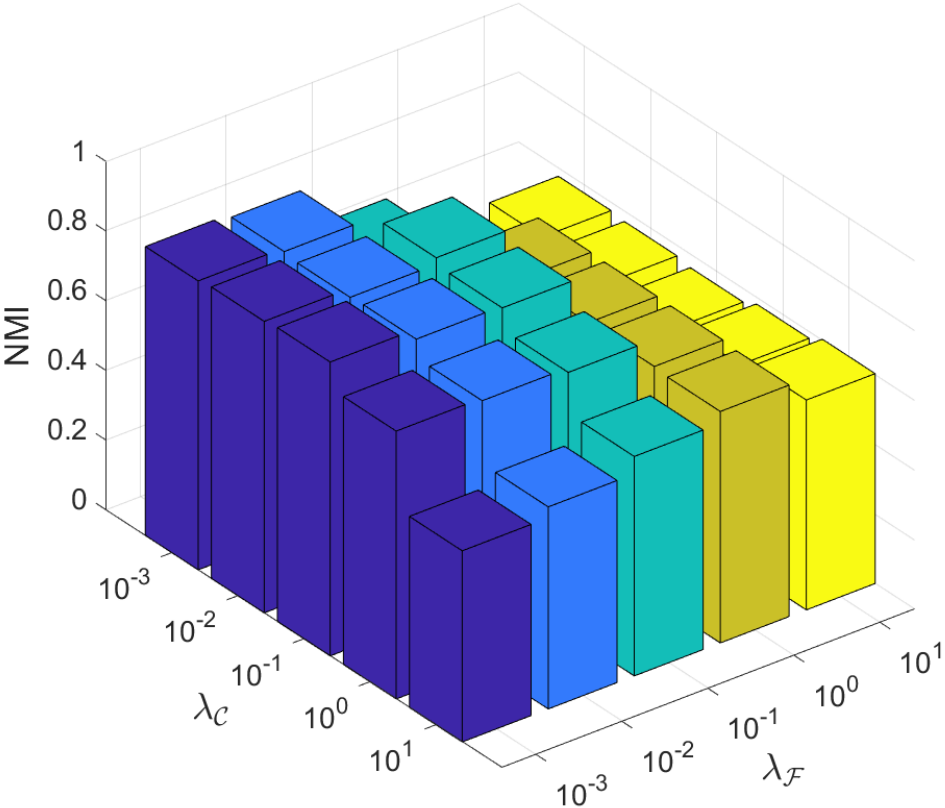}}
	\quad 
	\subfigure[BAL]{
		\includegraphics[width=0.25\linewidth]{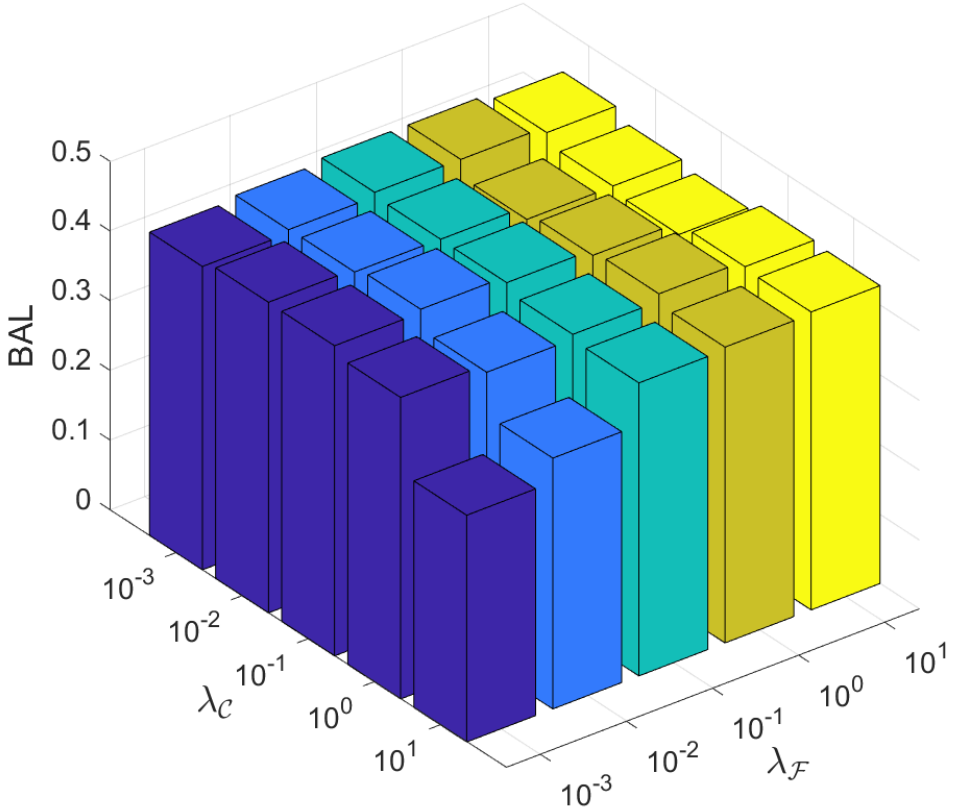}}

	\caption{The effect of parameters $\lambda_C$ and $\lambda_F$  in terms of  ACC, NMI and BAL on Mfeat data set.}

 \label{parameter2}
\end{figure*}
\subsection{Experimental Results}
Table \ref{compare} presents the detailed comparison results, based on which we make the following observations and remarks.
\begin{itemize}
    \item \underline{Overall performance}: AFMVC achieves generally good performance across all data sets and evaluation metrics, despite not attaining the best or the second best results on every individual data set or metric. As shown in Table~\ref{compare}, it achieves the highest average scores in terms of ACC and BAL, and ranks second in NMI. These results demonstrate that the proposed framework is effective in jointly optimizing clustering quality and group fairness through adversarial learning in the multi-view setting.
    \item \underline{Compared with single-view fair clustering method}: AFMVC achieves superior clustering accuracy compared to the three single-view baselines, largely because they rely solely on a single view and are unable to exploit the complementary information available in multi-view data. In particular, AFMVC outperforms VFC on nearly all data sets and metrics. While BFKM achieves competitive BAL scores, its performance in terms of ACC and NMI is still clearly inferior to that of AFMVC. Notably, FFC fails to produce clustering results on the Mfeat and COIL data sets due to excessive memory consumption, as its computation does not scale well to high-dimensional inputs.
    \item \underline{Compared with SOTA multi-view clustering methods}: The overall performance of AFMVC is superior to that of MCPL in terms of both clustering accuracy and fairness. Although its accuracy does not surpass CHOC and CGL in terms of ACC and NMI on standard multi-view data sets such as Mfeat and COIL, it still achieves consistently higher BAL scores. This confirms the effectiveness of our method in improving fairness without significantly compromising clustering quality. 
    \item \underline{Compared with multi-view fair clustering methods}: Since FairMVC is restricted to two-view data sets, it can only be evaluated on Credit Card, Bank Marketing, and Law School. On these data sets, AFMVC generally achieves better performance across all three metrics. In comparison to FMSC, AFMVC delivers higher clustering accuracy while maintaining comparable fairness. Specifically, AFMVC achieves a higher average BAL score, while FMSC ranks better.
\end{itemize}

Moreover, as illustrated in the Fig. \ref{time}, although  AFMVC is not faster than BFKM and FMSC, it demonstrates clear advantages over other methods. Meanwhile, AFMVC exhibits good robustness with respect to both data set size and feature dimensionality, maintaining stable runtime performance under varying conditions.

\subsection{Ablation Study}

To assess the effectiveness of different loss combinations, we conduct an ablation study by selectively enabling or disabling the reconstruction loss $\mathcal{L}_R$, the clustering  loss $\mathcal{L}_C$, and the fairness loss $\mathcal{L}_F$. As shown in Table \ref{ablation}, the full model (D), which incorporates all three losses, consistently achieves the highest BAL scores across all data sets, highlighting the critical role of adversarial fairness learning in promoting fair cluster assignments. Meanwhile, we note that removing the fairness loss may lead to higher accuracy on certain data sets, revealing an inherent trade-off between clustering performance and fairness.

\subsection{Parameter Sensitivity}

AFMVC optimizes a joint objective comprising reconstruction, clustering, and fairness losses. The hyper-parameters  $\lambda_C$ and $\lambda_F$. control the relative importance of these three components, enabling a flexible trade-off among data structure preservation, clustering consistency, and fairness. To assess their impact, we fix all other parameters and conduct a series of experiments to examine how variations in $\lambda_C$ and $\lambda_F$ affect the clustering performance.

Figs.  \ref{parameter1} and \ref{parameter2} present the parameter sensitivity evaluation  results on the Credit Card and Mfeat data sets, respectively (results on other data sets show similar trends). Both $\lambda_C$ and $\lambda_F$  are varied logarithmically from $10^{-3}$ to $10^{1}$. From the figures, we can observe that when $\lambda_C$ is set too large, the reconstruction loss is overly suppressed, potentially distorting the original data representation and leading to a significant drop in clustering performance, as reflected by both ACC and NMI. In contrast, the fairness metric (BAL) remains largely unaffected by variations in $\lambda_C$ and $\lambda_F$, demonstrating the robustness of our method in maintaining fairness.

\section{Conclusion}
\label{5}
In this paper, we propose AFMVC, a fairness-aware multi-view clustering framework that integrates adversarial training to eliminate sensitive information at the representation level. Theoretical analysis further demonstrates that fairness can be preserved through KL alignment of clustering results. Extensive experiments on  data sets with fairness constraints validate the effectiveness of our method in achieving superior fairness while maintaining competitive clustering performance.

For future work, we will focus on improving the robustness of fairness-aware multi-view clustering in more challenging scenarios, such as handling imbalanced sensitive attribute distributions, noisy labels, and outliers. Another promising direction is to extend the proposed method to incomplete multi-view data, where some views are partially missing. These efforts will further enhance the practicality and applicability of our framework in real-world applications.

\appendices

\section*{Acknowledgments}
This work has been supported by the Science and Technology Planning Project of Liaoning Province under Grant No. 2023JH26/10100008, and the  National Natural Science Foundation of China under Grant Nos. 62476038, and 62472064. 

\small
\bibliographystyle{IEEEtran}
\bibliography{refj}

\begin{thebibliography}{10}
\providecommand{\url}[1]{#1}
\csname url@samestyle\endcsname
\providecommand{\newblock}{\relax}
\providecommand{\bibinfo}[2]{#2}
\providecommand{\BIBentrySTDinterwordspacing}{\spaceskip=0pt\relax}
\providecommand{\BIBentryALTinterwordstretchfactor}{4}
\providecommand{\BIBentryALTinterwordspacing}{\spaceskip=\fontdimen2\font plus
\BIBentryALTinterwordstretchfactor\fontdimen3\font minus \fontdimen4\font\relax}
\providecommand{\BIBforeignlanguage}[2]{{%
\expandafter\ifx\csname l@#1\endcsname\relax
\typeout{** WARNING: IEEEtran.bst: No hyphenation pattern has been}%
\typeout{** loaded for the language `#1'. Using the pattern for}%
\typeout{** the default language instead.}%
\else
\language=\csname l@#1\endcsname
\fi
#2}}
\providecommand{\BIBdecl}{\relax}
\BIBdecl

\bibitem{sun2013survey}
S.~Sun, ``A survey of multi-view machine learning,'' \emph{Neural Computing and Applications}, vol.~23, pp. 2031--2038, 2013.

\bibitem{oyewole2023data}
G.~J. Oyewole and G.~A. Thopil, ``Data clustering: application and trends,'' \emph{Artificial intelligence review}, vol.~56, no.~7, pp. 6439--6475, 2023.

\bibitem{fang2023comprehensive}
U.~Fang, M.~Li, J.~Li, L.~Gao, T.~Jia, and Y.~Zhang, ``A comprehensive survey on multi-view clustering,'' \emph{IEEE Transactions on Knowledge and Data Engineering}, vol.~35, no.~12, pp. 12\,350--12\,368, 2023.

\bibitem{chhabra2021overview}
A.~Chhabra, K.~Masalkovait{\.e}, and P.~Mohapatra, ``An overview of fairness in clustering,'' \emph{IEEE Access}, vol.~9, pp. 130\,698--130\,720, 2021.

\bibitem{rutherglen1987disparate}
G.~Rutherglen, ``Disparate impact under title vii: an objective theory of discrimination,'' \emph{Virginia Law Review}, vol.~73, p. 1297, 1987.

\bibitem{zheng2023fairness}
L.~Zheng, Y.~Zhu, and J.~He, ``Fairness-aware multi-view clustering,'' in \emph{Proceedings of the 2023 SIAM International Conference on Data Mining}.\hskip 1em plus 0.5em minus 0.4em\relax SIAM, 2023, pp. 856--864.

\bibitem{zhao2024dfmvc}
B.~Zhao, Q.~Wang, Z.~Tao, W.~Feng, and Q.~Gao, ``Dfmvc: Deep fair multi-view clustering,'' in \emph{Proceedings of the 32nd ACM International Conference on Multimedia}, 2024, pp. 8090--8099.

\bibitem{li2024one}
R.~Li, H.~Hu, L.~Du, J.~Chen, B.~Jiang, and P.~Zhou, ``One-stage fair multi-view spectral clustering,'' in \emph{Proceedings of the 32nd ACM International Conference on Multimedia}, 2024, pp. 1407--1416.

\bibitem{li2021consensus}
Z.~Li, C.~Tang, X.~Liu, X.~Zheng, W.~Zhang, and E.~Zhu, ``Consensus graph learning for multi-view clustering,'' \emph{IEEE Transactions on Multimedia}, vol.~24, pp. 2461--2472, 2021.

\bibitem{liang2019consistency}
Y.~Liang, D.~Huang, and C.-D. Wang, ``Consistency meets inconsistency: A unified graph learning framework for multi-view clustering,'' in \emph{Proceedings of the 2019 IEEE International Conference on Data Mining (ICDM)}.\hskip 1em plus 0.5em minus 0.4em\relax IEEE, 2019, pp. 1204--1209.

\bibitem{li2019flexible}
R.~Li, C.~Zhang, Q.~Hu, P.~Zhu, and Z.~Wang, ``Flexible multi-view representation learning for subspace clustering.'' in \emph{Proceedings of the Twenty-Eighth International Joint Conference on Artificial Intelligence}, 2019, pp. 2916--2922.

\bibitem{yang2019split}
Z.~Yang, Q.~Xu, W.~Zhang, X.~Cao, and Q.~Huang, ``Split multiplicative multi-view subspace clustering,'' \emph{IEEE Transactions on Image Processing}, vol.~28, no.~10, pp. 5147--5160, 2019.

\bibitem{wang2018multiview}
Y.~Wang, L.~Wu, X.~Lin, and J.~Gao, ``Multiview spectral clustering via structured low-rank matrix factorization,'' \emph{IEEE Transactions on Neural Networks and Learning Systems}, vol.~29, no.~10, pp. 4833--4843, 2018.

\bibitem{yang2020uniform}
Z.~Yang, N.~Liang, W.~Yan, Z.~Li, and S.~Xie, ``Uniform distribution non-negative matrix factorization for multiview clustering,'' \emph{IEEE Transactions on Cybernetics}, vol.~51, no.~6, pp. 3249--3262, 2020.

\bibitem{liu2023contrastive}
J.~Liu, X.~Liu, Y.~Yang, Q.~Liao, and Y.~Xia, ``Contrastive multi-view kernel learning,'' \emph{IEEE Transactions on Pattern Analysis and Machine Intelligence}, vol.~45, no.~8, pp. 9552--9566, 2023.

\bibitem{yuan2022robust}
C.~Yuan, Y.~Zhu, Z.~Zhong, W.~Zheng, and X.~Zhu, ``Robust self-tuning multi-view clustering,'' \emph{World Wide Web}, vol.~25, no.~2, pp. 489--512, 2022.

\bibitem{10.5555/3367243.3367449}
Z.~Li, Q.~Wang, Z.~Tao, Q.~Gao, and Z.~Yang, ``Deep adversarial multi-view clustering network,'' in \emph{Proceedings of the 28th International Joint Conference on Artificial Intelligence}, 2019, p. 2952–2958.

\bibitem{gao2020cross}
Q.~Gao, H.~Lian, Q.~Wang, and G.~Sun, ``Cross-modal subspace clustering via deep canonical correlation analysis,'' in \emph{Proceedings of the AAAI Conference on Artificial Intelligence}, vol.~34, no.~04, 2020, pp. 3938--3945.

\bibitem{trosten2021reconsidering}
D.~J. Trosten, S.~Lokse, R.~Jenssen, and M.~Kampffmeyer, ``Reconsidering representation alignment for multi-view clustering,'' in \emph{Proceedings of the IEEE/CVF Conference on Computer Vision and Pattern Recognition}, 2021, pp. 1255--1265.

\bibitem{qin2021semi}
Y.~Qin, H.~Wu, X.~Zhang, and G.~Feng, ``Semi-supervised structured subspace learning for multi-view clustering,'' \emph{IEEE Transactions on Image Processing}, vol.~31, pp. 1--14, 2021.

\bibitem{pmlr-v48-xieb16}
J.~Xie, R.~Girshick, and A.~Farhadi, ``Unsupervised deep embedding for clustering analysis,'' in \emph{Proceedings of The 33rd International Conference on Machine Learning}, vol.~48, 2016, pp. 478--487.

\bibitem{8999493}
Y.~Xie, B.~Lin, Y.~Qu, C.~Li, W.~Zhang, L.~Ma, Y.~Wen, and D.~Tao, ``Joint deep multi-view learning for image clustering,'' \emph{IEEE Transactions on Knowledge and Data Engineering}, vol.~33, no.~11, pp. 3594--3606, 2021.

\bibitem{XU2021279}
J.~Xu, Y.~Ren, G.~Li, L.~Pan, C.~Zhu, and Z.~Xu, ``Deep embedded multi-view clustering with collaborative training,'' \emph{Information Sciences}, vol. 573, pp. 279--290, 2021.

\bibitem{diallo2023auto}
B.~Diallo, J.~Hu, T.~Li, G.~A. Khan, X.~Liang, and H.~Wang, ``Auto-attention mechanism for multi-view deep embedding clustering,'' \emph{Pattern Recognition}, vol. 143, p. 109764, 2023.

\bibitem{zhou2020end}
R.~Zhou and Y.-D. Shen, ``End-to-end adversarial-attention network for multi-modal clustering,'' in \emph{Proceedings of the IEEE/CVF Conference on Computer Vision and Pattern Recognition}, 2020, pp. 14\,619--14\,628.

\bibitem{jung_et_al:LIPIcs.FORC.2020.5}
C.~Jung, S.~Kannan, and N.~Lutz, ``{Service in Your Neighborhood: Fairness in Center Location},'' in \emph{1st Symposium on Foundations of Responsible Computing}, vol. 156, 2020, pp. 5:1--5:15.

\bibitem{pmlr-v119-mahabadi20a}
S.~Mahabadi and A.~Vakilian, ``Individual fairness for k-clustering,'' in \emph{Proceedings of the 37th International Conference on Machine Learning}, vol. 119.\hskip 1em plus 0.5em minus 0.4em\relax PMLR, 2020, pp. 6586--6596.

\bibitem{10.5555/3540261.3541283}
D.~Chakrabarty and M.~Negahbani, ``Better algorithms for individually fair k-clustering,'' in \emph{Proceedings of the 35th International Conference on Neural Information Processing Systems}, 2021.

\bibitem{pmlr-v151-vakilian22a}
A.~Vakilian and M.~Yalciner, ``Improved approximation algorithms for individually fair clustering,'' in \emph{Proceedings of The 25th International Conference on Artificial Intelligence and Statistics}, vol. 151.\hskip 1em plus 0.5em minus 0.4em\relax PMLR, 2022, pp. 8758--8779.

\bibitem{pmlr-v151-chhaya22a}
R.~Chhaya, A.~Dasgupta, J.~Choudhari, and S.~Shit, ``On coresets for fair regression and individually fair clustering,'' in \emph{Proceedings of The 25th International Conference on Artificial Intelligence and Statistics}, vol. 151.\hskip 1em plus 0.5em minus 0.4em\relax PMLR, 2022, pp. 9603--9625.

\bibitem{pmlr-v238-bateni24a}
M.~Bateni, V.~Cohen-Addad, A.~Epasto, and S.~Lattanzi, ``A scalable algorithm for individually fair k-means clustering,'' in \emph{Proceedings of The 27th International Conference on Artificial Intelligence and Statistics}, vol. 238.\hskip 1em plus 0.5em minus 0.4em\relax PMLR, 2024, pp. 3151--3159.

\bibitem{NIPS2017_978fce5b}
F.~Chierichetti, R.~Kumar, S.~Lattanzi, and S.~Vassilvitskii, ``Fair clustering through fairlets,'' in \emph{Advances in Neural Information Processing Systems}, vol.~30.\hskip 1em plus 0.5em minus 0.4em\relax Curran Associates, Inc., 2017, pp. 5029--5037.

\bibitem{NEURIPS2020_f10f2da9}
S.~Ahmadian, A.~Epasto, M.~Knittel, R.~Kumar, M.~Mahdian, B.~Moseley, P.~Pham, S.~Vassilvitskii, and Y.~Wang, ``Fair hierarchical clustering,'' in \emph{Advances in Neural Information Processing Systems}, vol.~33, 2020, pp. 21\,050--21\,060.

\bibitem{schmidt2020fair}
M.~Schmidt, C.~Schwiegelshohn, and C.~Sohler, ``Fair coresets and streaming algorithms for fair k-means,'' in \emph{Approximation and Online Algorithms}, 2020, pp. 232--251.

\bibitem{NEURIPS2019_810dfbbe}
L.~Huang, S.~Jiang, and N.~Vishnoi, ``Coresets for clustering with fairness constraints,'' in \emph{Advances in Neural Information Processing Systems}, vol.~32, 2019, pp. 7587--7598.

\bibitem{pmlr-v171-chhabra22a}
A.~Chhabra, A.~Singla, and P.~Mohapatra, ``Fair clustering using antidote data,'' in \emph{Proceedings of The Algorithmic Fairness through the Lens of Causality and Robustness}, vol. 171.\hskip 1em plus 0.5em minus 0.4em\relax PMLR, 2022, pp. 19--39.

\bibitem{kleindessner2019guarantees}
M.~Kleindessner, S.~Samadi, P.~Awasthi, and J.~Morgenstern, ``Guarantees for spectral clustering with fairness constraints,'' in \emph{International Conference on Machine Learning}.\hskip 1em plus 0.5em minus 0.4em\relax PMLR, 2019, pp. 3458--3467.

\bibitem{10.1007/978-3-030-86520-7_47}
S.~Thejaswi, B.~Ordozgoiti, and A.~Gionis, ``Diversity-aware k-median: Clustering with fair center representation,'' in \emph{Machine Learning and Knowledge Discovery in Databases. Research Track}, 2021, pp. 765--780.

\bibitem{zhang2021deep}
H.~Zhang and I.~Davidson, ``Deep fair discriminative clustering,'' \emph{arXiv preprint arXiv:2105.14146}, 2021.

\bibitem{li2020deep}
P.~Li, H.~Zhao, and H.~Liu, ``Deep fair clustering for visual learning,'' in \emph{Proceedings of the IEEE/CVF Conference on Computer Vision and Pattern Recognition}, 2020, pp. 9070--9079.

\bibitem{kleindessner2019fair}
M.~Kleindessner, P.~Awasthi, and J.~Morgenstern, ``Fair k-center clustering for data summarization,'' in \emph{International Conference on Machine Learning}.\hskip 1em plus 0.5em minus 0.4em\relax PMLR, 2019, pp. 3448--3457.

\bibitem{jones2020fair}
M.~Jones, H.~Nguyen, and T.~Nguyen, ``Fair k-centers via maximum matching,'' in \emph{International conference on machine learning}.\hskip 1em plus 0.5em minus 0.4em\relax PMLR, 2020, pp. 4940--4949.

\bibitem{csiszar2011information}
I.~Csisz{\'a}r and J.~K{\"o}rner, \emph{Information theory: coding theorems for discrete memoryless systems}.\hskip 1em plus 0.5em minus 0.4em\relax Cambridge University Press, 2011.

\bibitem{prelov2008mutual}
V.~V. Prelov and E.~C. van~der Meulen, ``Mutual information, variation, and fano’s inequality,'' \emph{Problems of Information Transmission}, vol.~44, no.~3, pp. 185--197, 2008.

\bibitem{pmlr-v54-zafar17a}
M.~B. Zafar, I.~Valera, M.~G. Rogriguez, and K.~P. Gummadi, ``{Fairness Constraints: Mechanisms for Fair Classification},'' in \emph{Proceedings of the 20th International Conference on Artificial Intelligence and Statistics}, ser. Proceedings of Machine Learning Research, vol.~54.\hskip 1em plus 0.5em minus 0.4em\relax PMLR, 2017, pp. 962--970.

\bibitem{pan2023balanced}
R.~Pan, C.~Zhong, and J.~Qian, ``Balanced fair k-means clustering,'' \emph{IEEE Transactions on Industrial Informatics}, vol.~20, no.~4, pp. 5914--5923, 2023.

\bibitem{ziko2021variational}
I.~M. Ziko, J.~Yuan, E.~Granger, and I.~B. Ayed, ``Variational fair clustering,'' in \emph{Proceedings of the AAAI Conference on Artificial Intelligence}, vol.~35, no.~12, 2021, pp. 11\,202--11\,209.

\bibitem{pan2023fairness}
R.~Pan and C.~Zhong, ``Fairness first clustering: A multi-stage approach for mitigating bias,'' \emph{Electronics}, vol.~12, no.~13, p. 2969, 2023.

\bibitem{you2024consider}
X.~You, H.~Li, J.~You, and Z.~Ren, ``Consider high-order consistency for multi-view clustering,'' \emph{Neural Computing and Applications}, vol.~36, no.~2, pp. 717--729, 2024.

\bibitem{cai2024multi}
R.~Cai, H.~Chen, Y.~Mi, C.~Luo, S.-J. Horng, and T.~Li, ``Multi-view clustering via pseudo-label guide learning and latent graph structure recovery,'' \emph{Pattern Recognition}, vol. 151, p. 110420, 2024.

\end{thebibliography}

\begin{IEEEbiography}[{\includegraphics[width=1in,height=1.25in,clip,keepaspectratio]{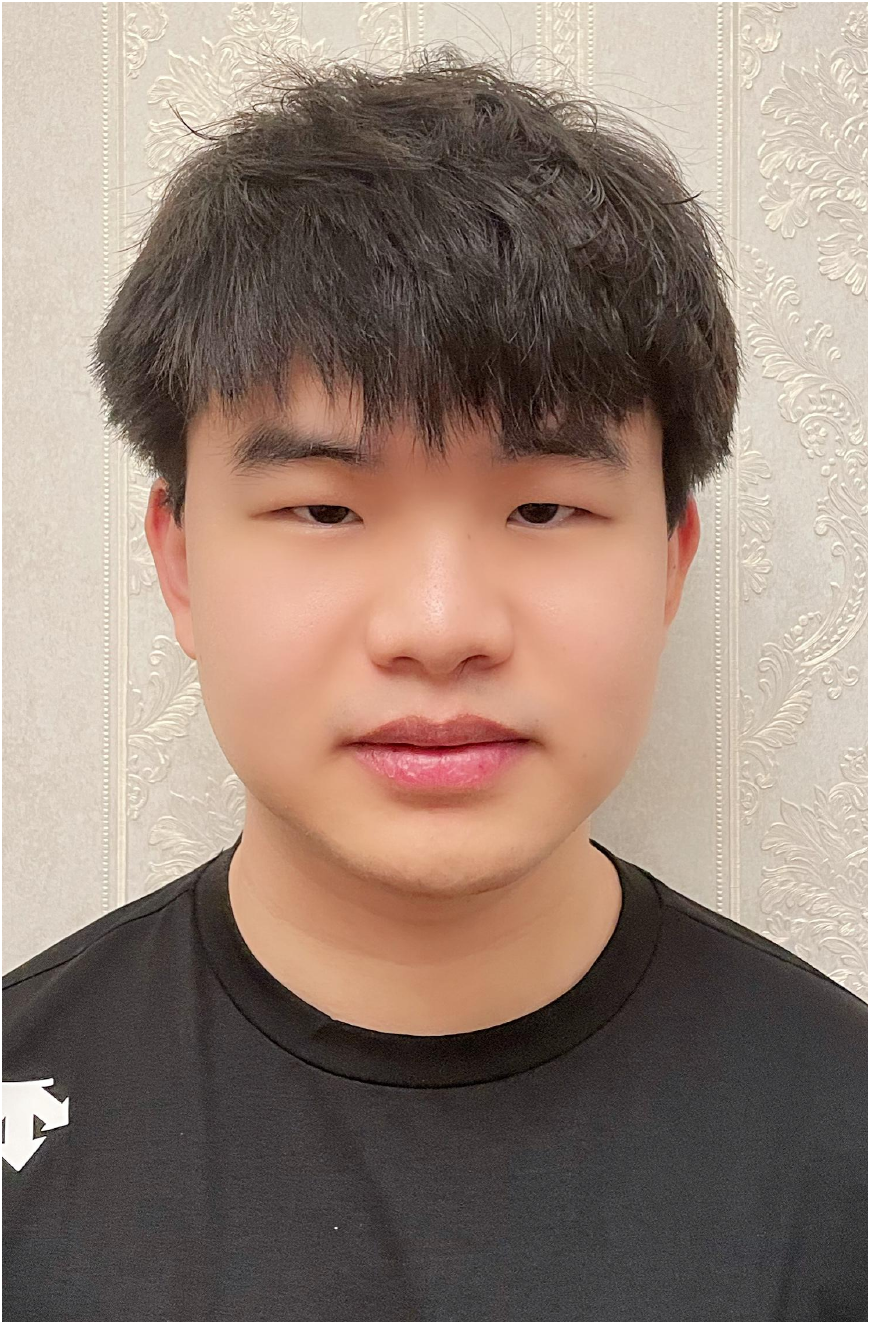}}]{Mudi Jiang}
	received the MS degree in software engineering from Dalian
University of Technology, China, in 2023. He is currently working toward
the PhD degree in the School of Software at the same university. His current
research interests include data mining and its applications.
\end{IEEEbiography}

\begin{IEEEbiography}[{\includegraphics[width=1in,height=1.25in,clip,keepaspectratio]{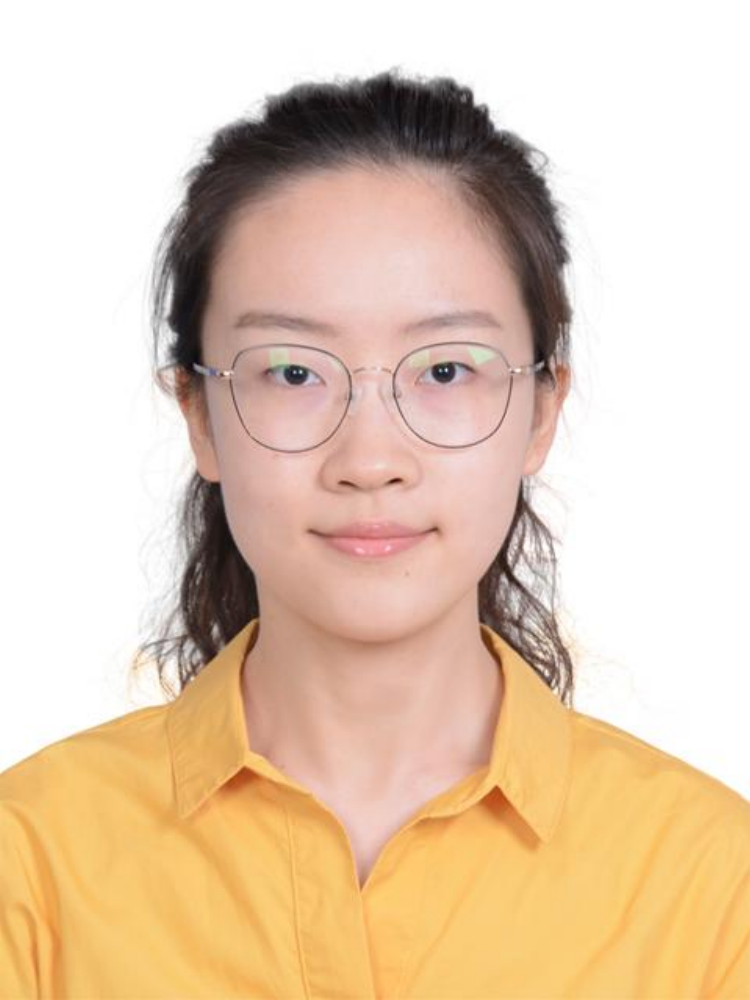}}]{Jiahui Zhou}
	received the BS degree from Dalian University of Technology, China, in 2022. She is currently working toward the MS degree in the School of Software at Dalian University of Technology. Her current
research interests include multimodal retrieval and data mining.
\end{IEEEbiography}

\begin{IEEEbiography}[{\includegraphics[width=1in,height=1.25in,clip,keepaspectratio]{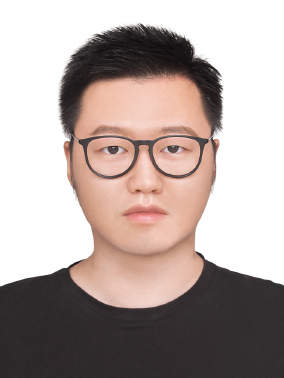}}]{Lianyu Hu}
	received the MS degree in computer science from Ningbo University, China, in 2019.
	He is currently working toward the PhD degree in the School of Software at Dalian University of Technology. His current research interests include machine learning, cluster analysis and data mining.
\end{IEEEbiography}

\begin{IEEEbiography}[{\includegraphics[width=1in,height=1.25in,clip,keepaspectratio]{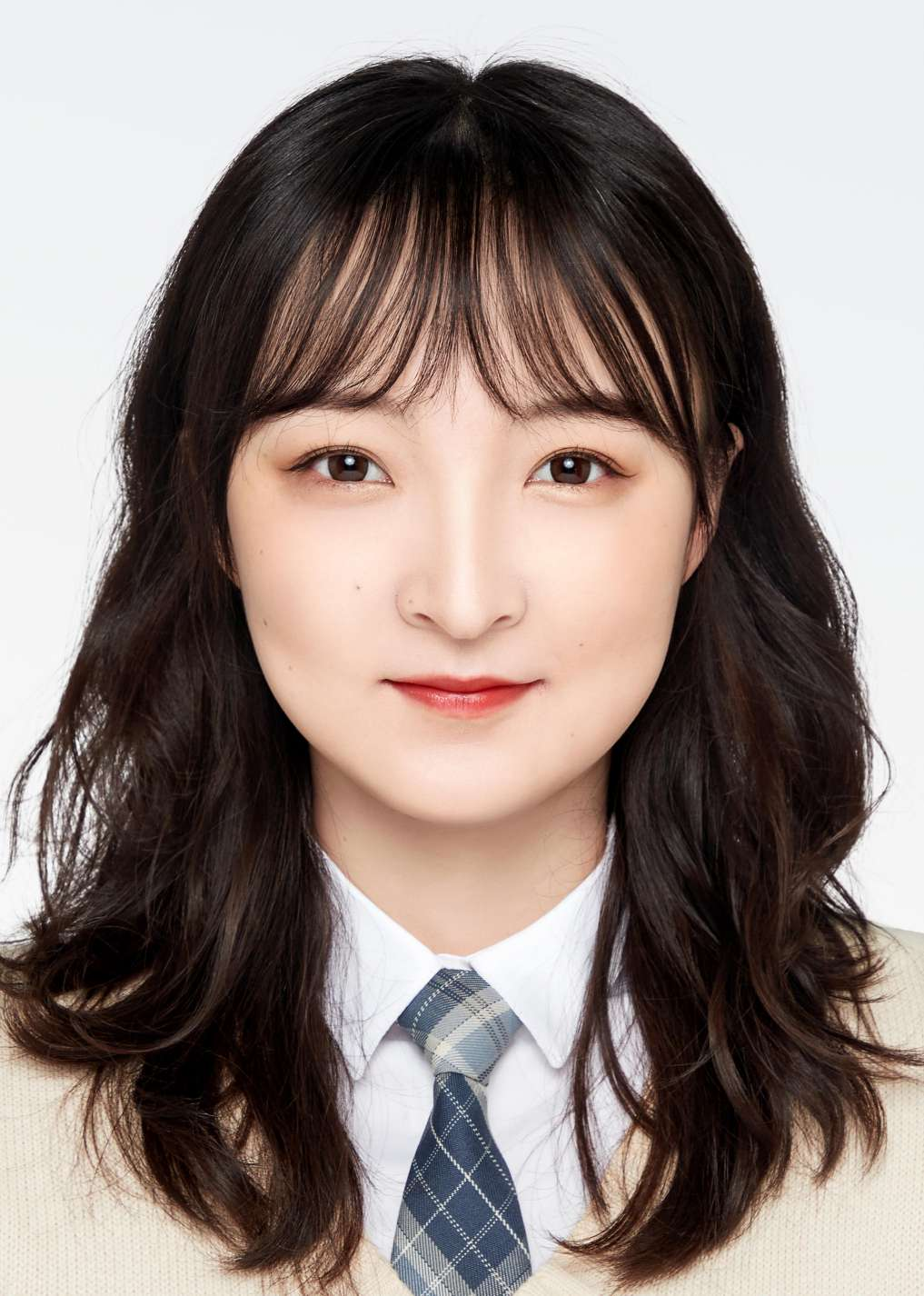}}]{Xinying Liu}
	received the MS degree in Applied Statistics from China University of Geosciences, China, in 2023. She is currently working toward the PhD degree in the School of Software at Dalian University of Technology. Her current research interests include machine learning and data mining.
\end{IEEEbiography}

\begin{IEEEbiography}[{\includegraphics[width=1in,height=1.25in,clip,keepaspectratio]{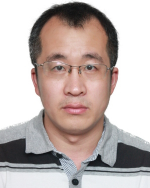}}]{Zengyou He}
	received the BS, MS, and PhD degrees in computer science from Harbin Institute of Technology, China, in 2000, 2002, and 2006, respectively. He was a research associate in the Department of Electronic and Computer Engineering, Hong Kong University of Science and Technology from February 2007 to February 2010. He is currently a professor in the School of software, Dalian University of Technology. His research interest include data mining and bioinformatics.
\end{IEEEbiography}

\begin{IEEEbiography}[{\includegraphics[width=1in,height=1.25in,clip,keepaspectratio]{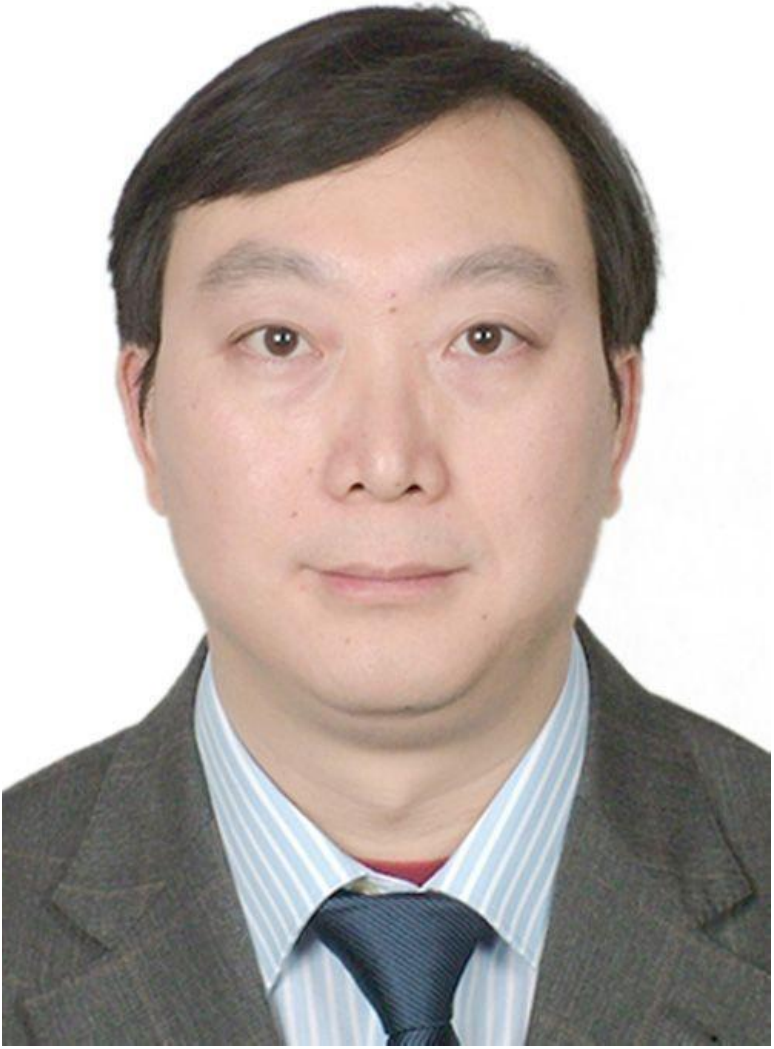}}]{Zhikui Chen}
	(Member, IEEE) received the B.S. degree in mathematics from Chongqing Normal University, Chongqing, China, in 1990, and the M.S. and Ph.D. degrees in mechanics from Chongqing University, Chongqing, in 1993 and 1998, respectively. He is currently a Full Professor with the Dalian University of Technology, Dalian, China.  His research interests are the Internet of Things, big data processing, mobile cloud computing, and ubiquitous networks.

\end{IEEEbiography}

\vfill

\end{document}